\newtheorem{theorem}{Theorem}
\newcommand{\PE}{\mathbb{E}}
\newcommand{\PP}{\mathbb{P}}
\newcommandx{\genericb}[1][1=]{b_{#1}}
\newcommandx{\Constros}[1][1=]{\operatorname{C}_{\operatorname{Ros},#1}}
\newcommandx{\Constburk}[1][1=]{\operatorname{C}_{\operatorname{Burk}}}
\newcommandx{\driftW}[1][1=]{W_{#1}}
\newcommandx{\metricd}[1][1=]{\mathsf{d}_{#1}}
\newcommandx\invmeasure[1][1=]{\Pi_{#1}}
\newcommandx{\PPjoint}[1][1=]{\PP^{\MKjoint[#1]}}
\newcommandx{\PEjoint}[1][1=]{\PE^{\MKjoint[#1]}}
\newcommandx{\PEMID}[1][1=\alpha]{\PE^{\MK[#1]}}
\newcommandx{\PPMID}[1][1=\alpha]{\PP^{\MK[#1]}}
\newcommandx{\MKjoint}[1][1=]{\bar{\operatorname{P}}_{#1}}
\newcommandx\costw[1][1=]{\mathsf{c}_{#1}}
\newcommandx\Intergrdist[1][1=]{\mathbb{M}_{1}(#1)}
\newcommandx{\mmarkov}[1][1=0]{m^{(\Markov)}_{#1}}
\def\rset{\mathbb{R}}
\def\nset{\ensuremath{\mathbb{N}}}
\def\nsets{\ensuremath{\mathbb{N}^*}}
\newcommandx\sequence[4][2=,3=,4=]
\newcommandx\sequenceD[2][2=]
\newcommandx\sequenceDouble[4][3=,4=]
\newcommandx{\sequencen}[2][2=n\in\nset]{\ensuremath{\{ #1, \eqsp #2 \}}}
\newcommandx\sequencens[2][2=n]
\newcommandx\sequencet[4]
\def\PE{\mathbb{E}}
\newcommandx{\PVar}[1][1=]{\ensuremath{\operatorname{Var}_{#1}}}
\newcommandx\conststab[1][1=p]{\varkappa_{#1}}
\newcommandx{\MK}[1][1=\alpha]{\mathrm{P}_{#1}}
\newcommandx\MKK[1][1=\alpha]{\mathrm{K}_{#1}}
\newcommandx{\PEtilde}[1][1=]{\PE^{\mathrm{K}_{#1}}}
\newcommandx{\PPtilde}[1][1=]{\PP^{\mathrm{K}_{#1}}}
\newcommandx{\norm}[2][2=]{\Vert#1 \Vert_{{#2}}}
\newcommandx{\normLigne}[2][2=]{\Vert#1 \Vert_{{#2}}}
\newcommandx{\normLine}[2][2=]{\Vert#1 \Vert_{{#2}}}
\newcommandx{\normop}[2][2=]{\Vert{#1}\Vert_{{#2}}}
\newcommandx{\normopLigne}[2][2=]{\Vert{#1}\Vert_{{#2}}}
\newcommandx{\normopLine}[2][2=]{\Vert{#1}\Vert_{{#2}}}
\newcommandx{\osc}[2][1=]{\mathrm{osc}_{#1}(#2)}
\newcommandx{\normlip}[2][2=\operatorname{Lip}]{\Vert#1 \Vert_{{#2}}}
\newcommand{\lip}{\operatorname{L}}
\newcommandx{\lipspace}[1]{\lip_{#1}}
\newcommandx{\CPP}[3][1=]
{\ifthenelse{\equal{#1}{}}{{\mathbb P}\left(\left. #2 \, \right| #3 \right)}{{\mathbb P}_{#1}\left(\left. #2 \, \right | #3 \right)}}
\newcommandx{\CPPtilde}[3][1=]
{\ifthenelse{\equal{#1}{}}{{\tilde{\mathbb P}}\left(\left. #2 \, \right| #3 \right)}{{\tilde{\mathbb P}}_{#1}\left(\left. #2 \, \right | #3 \right)}}
\newcommandx{\as}[1][1=\PP]{\ensuremath{#1\, -\mathrm{a.s.}}}
\newcommand{\eqsp}{\;}
\newcommandx{\boundmetric}[1][1=]{\kappa_{\MKK[#1]}}
\newcommandx{\Nnorm}[2][1=V]{[ #2]_{#1}}
\newcommandx{\lipnorm}[2][1=g]{[ #1]_{#2}}
\newcommandx{\CPE}[3][1=]{{\mathbb E}^{#3}_{#1}\left[#2\right]}
\newcommandx{\CPEext}[3][1=]{\tilde{\mathbb E}^{#3}_{#1}\left[#2\right]}
\newcommandx{\CPEtilde}[3][1=]{{\tilde{\mathbb E}}^{#3}_{#1}\left[#2\right]}
\newcommandx{\CPEs}[3][1=]{{\mathbb E}^{#3}_{#1}[#2]}
\newcommand{\rmd}{\mathrm{d}}
\def\funcAw{\mathbf{A}}
\def\funcbw{\mathbf{b}}
\newcommandx{\zmfuncA}[2][1=]{\tilde{\funcAw}^{#1}(#2)}
\newcommandx{\zmfuncAw}[1][1=]{\tilde{\funcAw}_{#1}}
\newcommandx{\zmfuncb}[2][1=]{\tilde{\funcbw}^{#1}(#2)}
\newcommandx{\funcct}[2][1=]{\funcctilde^{#1}(#2)}
\newcommandx{\CovC}[1][1=u]{\operatorname{C}_{#1}}
\DeclareMathAlphabet{\mathpzc}{OT1}{pzc}{m}{it}
\def\lyapW{\mathpzc{W}}
\newcommandx{\bias}[1][1=\alpha]{\operatorname{B}_{#1}}
\newcommandx\probaMarkovTilde[2][2=]
\def\transpose{\top}
\def\funcctilde{\tilde{c}_u}
\newcommandx{\driftb}[1][1=p]{\bar{b}_{#1}}
\def\transpose{\top}
\newcommandx{\boldb}[1][1={q}]{\mathsf{b}_{#1}}
\newcommandx{\ConstGW}[1][1={n,\lyapW}]{\operatorname{G}_{#1}}
\newcommandx{\ConstMW}[1][1={n,\lyapW}]{\operatorname{M}_{#1}}
\Crefname{assumTD}{\textbf{TD}\hspace{-1pt}}{\textbf{TD}\hspace{-1pt}}
\crefname{assumTD}{\textbf{TD}}{\textbf{TD}}
\Crefname{assumptionC}{\textbf{C}\hspace{-1pt}}{\textbf{C}\hspace{-1pt}}
\crefname{assumptionC}{\textbf{C}}{\textbf{C}}
\Crefname{assumptionM}{\textbf{UGE}\hspace{-1pt}}{\textbf{UGE}\hspace{-1pt}}
\crefname{assumptionM}{\textbf{UGE}}{\textbf{UGE}}
\newtheorem{remark}{\textbf{Remark}\hspace{-1pt}}
\Crefname{remark}{\textbf{Remark}\hspace{-1pt}}{\textbf{remark}\hspace{-1pt}}
\crefname{remark}{\textbf{Remark}}{\textbf{remark}}
\def\distance{\mathsf{d}}
\newcommandx{\vartconstwas}[1][1=V]{c_{#1}}
\newcommandx{\deltawas}[1][1=*]{\delta_{#1}}
\newcommandx{\wasser}[4][1=\distance,4=]{\mathbf{W}_{#1}^{#4}\left(#2,#3\right)}
\newcommandx{\covcoeff}[2]{\rho_{#1}^{(#2)}}
\newcommand{\dobrush}{\mathsf{\Delta}}
\newcommandx{\dobru}[3][1=,3=]{\dobrush_{#1}^{#3}( #2)}  
\def\Markov{\mathrm{M}}
\newcommandx{\dlim}[1]{\ensuremath{\stackrel{#1}{\Longrightarrow}}}
\newcommand{\linecomment}[1]{\quad \texttt{//}~#1}
\begin{document}

\thispagestyle{fancy}
\fancyhf{}
\renewcommand{\headrulewidth}{0pt}
\fancyfoot[L]{\textit{Preprint. Under review.}}
\fancyfoot[C]{}

\begin{center}
  \vspace*{0.5cm}
  
  \noindent\rule{\textwidth}{1pt}
  
  \vspace{4mm}
  
  {\Large\bfseries Scalable LinUCB: Low-Rank Design Matrix Updates \\ for Recommenders with Large Action Spaces}
  
  \vspace{4mm}
  
  \noindent\rule{\textwidth}{1.5pt}
  
  \vspace{1cm}
  
  \begin{tabular}{cc}
    \begin{minipage}{0.4\textwidth}
      \centering
      \textbf{Evgenia Shustova} \\
      HSE University \\
      \texttt{ekshustova@hse.ru}
    \end{minipage}
    &
    \begin{minipage}{0.4\textwidth}
      \centering
      \textbf{Marina Sheshukova} \\
      HSE University \\
      \texttt{msheshukova@hse.ru}
    \end{minipage}
  \end{tabular}
  
  \vspace{0.8cm}
  
  \begin{tabular}{cc}
    \begin{minipage}{0.4\textwidth}
      \centering
      \textbf{Sergey Samsonov} \\
      HSE University \\
      \texttt{svsamsonov@hse.ru}
    \end{minipage}
    &
    \begin{minipage}{0.4\textwidth}
      \centering
      \textbf{Evgeny Frolov} \\
      Personalization Technologies; \\
      HSE University \\
      \texttt{e.frolov@hse.ru}
    \end{minipage}
  \end{tabular}
  
  \vspace{1cm}
  
  \textbf{Abstract}
  
\end{center}

\vspace{3mm}

In this paper, we introduce PSI-LinUCB, a scalable variant of LinUCB that enables efficient training, inference, and memory usage by representing the inverse regularized design matrix as a sum of a diagonal matrix and low-rank correction. We derive numerically stable rank-1 and batched updates that maintain the inverse without explicitly forming the matrix. To control memory growth, we employ a projector-splitting integrator for dynamical low-rank approximation, yielding an average per-step update cost and memory usage of $\mathcal{O}(dr)$ for approximation rank $r$. The inference complexity of the proposed algorithm is $\mathcal{O}(dr)$ per action evaluation. Experiments on recommender system datasets demonstrate the effectiveness of our algorithm.
\vspace{3mm}

\noindent\textbf{Keywords:} LinUCB algorithm, Low-rank approximation, Projector-Splitting integrator

\section{Introduction and Problem Setting}
\label{sec:intro}



Contextual bandits are essential in modern decision making, as they enable online adaptation to dynamic environments and explicitly balance exploration and exploitation \cite{auer2002finite}. These methods are well studied theoretically \cite{auer2002finite,abbasi2011improved,agrawal2013thompson} and are widely used in practice, in particular, in the real-world recommender systems \cite{li2010contextual,yi2023online}. Within this class of methods, LinUCB \cite{li2010contextual,abbasi2011improved} is one of the most commonly used algorithms. It models the expected reward of an action as a linear function of a $d$-dimensional context vector and follows the Upper Confidence Bound (UCB) principle, implementing optimism in the face of uncertainty \cite{auer2002finite}. LinUCB admits practical implementations for online recommendation settings \cite{li2010contextual} and serves as a natural baseline when online updates are required.
\par
For the most part of this paper, we focus on the disjoint LinUCB parametrization \cite{li2010contextual,das2024linear}, yet the techniques that we develop further naturally generalize to hybrid and shared parametrizations used in this algorithm. Formally, at each time step $t \in \{1,\ldots,T\}$, agent observes a set of arms (actions) $\mathcal{A}_t$, where each arm $a \in \mathcal{A}$ is associated with a context vector ${x}_{t,a} \in \rset^d$. LinUCB assumes a linear reward model,
\begin{equation}
\textstyle
\PE[r_{t,a}|{x}_{t,a}] = {x}_{t,a}^\top {\theta}_a^*,
\end{equation}
where ${\theta}_a^* \in \rset^d$ is the unknown true parameter vector for arm $a$. For each arm $a \in \mathcal{A}$, the algorithm maintains a ridge regression estimator with a design matrix
\begin{equation}
\label{eq:design_matrix_lin_ucb}
\textstyle
A_{t,a} = \lambda I + \sum_{s=1}^{t-1} {x}_{s,a} {x}_{s,a}^\top,
\end{equation}
reward vector
\begin{equation}
\textstyle
    b_{t,a} = \sum_{s=1}^{t-1} r_{s,a} {x}_{s,a},
\end{equation}
and parameter estimate
\begin{equation}
\textstyle
\label{eq:param_update_lin_ucb}
\hat{{\theta}}_{a} = A_{t,a}^{-1} b_{t,a}.
\end{equation}
At time $t$, the algorithm selects the arm according to
\begin{equation}
\label{eq:arm_choice_lin_ucb}
\textstyle
a_t = \arg\max_{a \in \mathcal{A}_t} 
\left( 
\hat{{\theta}}_a^\top {x}_{t,a} + \alpha \sqrt{{x}_{t,a}^\top A_{t, a}^{-1} {x}_{t,a}} 
\right),
\end{equation}
where $\alpha \in \rset$ controls the exploration–exploitation trade-off. In online learning problems $\alpha > 0$, but sometimes learning problems with fixed dataset require setting negative values of $\alpha$ (the so-called anti-exploration), see e.g. \cite{rezaeifar2022offline,xu2024provably}.

 LinUCB can also be extended to batch-update settings, where model parameters are updated after multiple interactions (see \Cref{sec:alg_linUCB_batch} in Appendix). At the same time, LinUCB algorithm has several scalability challenges:
\begin{itemize}[noitemsep,topsep=0pt,leftmargin=0em]
    \item \textit{Matrix inversion:}  The complexity of computing  $A_{t, a}^{-1}$ is $O(d^3)$, which makes the algorithm computationally expensive if context dimension $d$ is large.
    \item \textit{Large action space:} The need to store and update a separate $d \times d$ matrix for each action leads to increased time and memory requirements as the number of actions grows.
\end{itemize}


Existing work addresses the scalability limitations of LinUCB in several ways. The simplest acceleration relies on exact rank-one updates of the inverse design matrix using the Sherman–Morrison formula \cite{Angioli2025Efficient}, which avoids full matrix inversion but still requires maintaining dense $d \times d$ matrices and does not scale to large feature dimensions and large action space.
Another line of work focuses on approximating the design matrix $\sum_{s=1}^t x_{s,a} x_{s,a}^\top$ with a low-rank representation using matrix sketching techniques \cite{kuzborskij2019efficient,chen2020efficient,wen2024matrix}. While these methods significantly reduce memory and computation costs, they typically process observations sequentially and do not naturally support vectorized or batch updates.
An alternative approach applies random feature projections to reduce the dimensionality of the context vectors before learning \cite{yu2017cbrap}. Such methods enable faster updates but often require a relatively large projected dimension to preserve recommendation quality, making them less efficient than sketching-based low-rank approximations.
Finally, diagonal or block-diagonal approximations of the design matrix have been proposed to reduce computational and memory costs \cite{yi2023online}. We provide a detailed overview of the existing literature in the Appendix, \Cref{sec:related-work}.

\textbf{Our contrubutions.} In this paper, we propose a scalable variant of LinUCB designed for large-scale contexts settings. Our approach is based on a suitable representation of the inverse regularized design matrix. Specifically, we approximate $A_{t,a}^{-1}$ as the sum of a diagonal matrix and a low-rank correction. This representation avoids explicit matrix inversion, and significantly reduces memory usage by storing only the diagonal terms and low-rank factors. Our primary contributions are as follows:

\begin{itemize}[noitemsep,topsep=0pt,leftmargin=0em]
\item We introduce PSI-LinUCB, a scalable LinUCB variant that maintains a Cholesky-style representation of the inverse regularized design matrix as a sum of diagonal term and dynamically updated low-rank component. Using a projector-splitting method \cite{lubich2014projector}, our algorithm avoids explicit matrix inversion. Our method naturally supports vectorized and batch updates over multiple context vectors, which is crucial for efficient deployment in modern recommender systems. PSI-LinUCB achieves an average per-interaction update cost of $\mathcal{O}(dr)$ under the proper choice of batch size, with $r$ being the approximation rank. The inference complexity is $\mathcal{O}(dr)$ per action.
\item We empirically show that PSI-LinUCB is robust with respect to the choice of the approximation rank: increasing $r$ leads to a smooth and nearly linear improvement in recommendation quality (measured by hit rate), making the method easy to tune in practice.
\item We evaluate PSI-LinUCB on several large-scale datasets from recommender systems domain, where standard LinUCB fails to run due to scalability constraints. We ensure reduction in  memory consumption, and computational time compared to the exact Sherman–Morrison implementation of LinUCB \cite{Angioli2025Efficient}, while matching the hit rate of the vanilla LinUCB algorithm. We also show that PSI-LinUCB outperforms sketching-based baselines in terms of computational time, while achieving similar or better quality, both on large-scale datasets and in online synthetic environments. 
\end{itemize}

\section{Scalable LinUCB with Low-Rank Updates}
\label{sec:low_rank}
The main computational cost  of LinUCB lies in updating the parameter estimate $\hat{\theta}_a$ for each arm, which requires access to the inverse matrix $A_{t,a}^{-1}$. Existing approaches mitigate this issue either by applying rank-1 Sherman–Morrison updates \cite{Angioli2025Efficient} or by approximating the covariance matrix
\[
\textstyle
\sum_{s=1}^{t-1} {x}_{s,a} {x}_{s,a}^\top \in \rset^{d \times d}\eqsp,
\]
using low-rank sketching techniques such as Frequent Directions \cite{kuzborskij2019efficient} and CBSCFD \cite{chen2020efficient}. In contrast to these works, we employ an alternative approach, which relies on the dynamic representation of the \emph{inverse} matrix $A_{t,a}^{-1}$. Towards this aim, we use the Cholesky-style symmetric factorization $A_{t,a} = L_{t,a} L_{t,a}^\top$ and note that in this case
\begin{equation}
\label{eq:inverse_A_matrix_Cholesky}
A_{t,a}^{-1} = L_{t,a}^{-\top} L_{t,a}^{-1}\eqsp.
\end{equation}

The motivation for working with the inverse design matrix is that in LinUCB the design matrix $A_{t,a}$ is never used directly: both the parameter estimate
\[
\hat{\theta}_{t,a} = A_{t,a}^{-1} b_{t,a}
\]
and the exploration bonus
\[
\sqrt{x_{t,a}^\top A_{t,a}^{-1} x_{t,a}}
\]
involve only matrix--vector products with $A_{t,a}^{-1}$. Therefore, maintaining or approximating $A_{t,a}$ itself is algorithmically unnecessary.

The Cholesky-style factorization \eqref{eq:inverse_A_matrix_Cholesky} makes this observation particularly convenient. Indeed, it allows both quantities above to be computed via matrix-vector operations involving $L_{t,a}^{-1}$ only. This representation enables us to work directly with the inverse operator without explicitly forming or storing $A_{t,a}^{-1}$.

The representation \eqref{eq:inverse_A_matrix_Cholesky} is a key formula for our further analysis. Importantly, we \emph{do not} explicitly compute the inverse factors $L_{t,a}^{-1}$. Instead, they are updated dynamically from their previous values. Since the update rules are identical across arms, we omit the index $a$ in the remainder of this section and write $A_{t}$ and $L_t$ (respectively, $L_t^{-1}$) instead of $A_{t,a}$ and $L_{t,a}$.

\par 
In the next parts of this section, we present our methodology in different setups. First, in \Cref{sec:rank-1-upd} we consider the case of rank-1 updates and write the dynamics of updates of the inverse root matrix $L_{t}^{-1}$. Then in \Cref{sec:batch-updates} we generalize our expressions for the batch update case. Both representations rely on a recursive definition in the form of matrix decomposition $U_t V_t^\top$. In \Cref{sec:psi}, we show that this decomposition can be dynamically updated in a low-rank format using the Projector-Splitting Integrator (PSI) method of Lubich and Oseledets \cite{lubich2014projector}. Such updating mechanism prevents an uncontrollable growth of factor matrices $U_t$ and $V_t$ over time and enables ``on-the-fly'' adaptation to the stream of contextual data. This approach has been previously considered in the context of modifying the PureSVD model in recommender systems \cite{Olaleke2021Dynamic}. 

\subsection{Rank-1 updates}
\label{sec:rank-1-upd}
We first consider the rank-1 update setting, where context vectors arrive sequentially and model parameters are updated after each interaction. In this case, the design matrix update at time $t+1$ takes the form
\begin{equation}
\label{eq:rank1_A_t}
A_{t+1} = A_t + x_{t+1}x_{t+1}^{\top}.
\end{equation}
While inverse updates can be obtained via the Sherman–Morrison formula \cite{Angioli2025Efficient}, we instead derive an alternative representation that enables efficient low-rank updates within our proposed framework.

We begin by expressing the rank-1 update in a Cholesky-style factorized form. Using $A_t = L_t L_t^\top$, the update at time $t+1$ can be written as 
\begin{equation}
A_{t+1} = L_tL_t^{\top} + x_{t+1}x_{t+1}^{\top} = L_t(I + L_t^{-1}x_{t+1}x_{t+1}^{\top}L_t^{-\top})L_t^{\top}.
\end{equation}
Hence, we can rewrite the design matrix as
\[
A_{t+1} =L_{t+1}L_{t+1}^{\top}\eqsp,
\]
where we set
\begin{equation}
\label{eq:def_l_t_plus_1_rec}
L_{t+1} = L_t(I+ \alpha_{t+1} \tilde{x}_{t+1}\tilde{x}_{t+1}^{\top})\eqsp, \eqsp \tilde{x}_{t+1} = L_t^{-1}x_{t+1}\eqsp,
\end{equation}
and parameter $\alpha_{t+1} \in \rset$ such that 
\[
1 + \alpha_{t+1}\,\|\tilde{x}_{t+1}\|^2
\;=\;
\sqrt{\,1 + \|\tilde{x}_{t+1}\|^2\,}\eqsp.
\]
The correctness of the specified recursive definition of $L_{t+1}$ in \eqref{eq:def_l_t_plus_1_rec} can be verified by direct substitution. The following theorem for the rank-1 updates setting will be useful for illustrating the essence of our approach. 

\begin{theorem}
\label{prop: rank-one update}
Let $\varepsilon > 0$, $L_0^{-1} = \varepsilon^{-1/2} I \in \rset^{d \times d}$, and $U_0$, $V_0$ be empty matrices. Given a sequence of context vectors $\{x_t\}_{t \in \nset}$  set
\[
\beta_{t+1} \;=\; \frac{\alpha_{t+1}}{1 + \alpha_{t+1}\,\|\tilde{x}_{t+1}\|^2}.
\]
Then the inverse root \(L_{t+1}^{-1}\) can be expressed as
\[
L^{-1}_{t+1}
\;=\;
\bigl(I - U_{t+1}\,V_{t+1}^\top\bigr)\,L_0^{-1},
\]
where the matrices $U_{t+1} \in \rset^{d \times (t+1)}$ and $V_{t+1} \in \rset^{d \times (t+1)}$ are recursively updated with column-wise concatenation
\[
U_{t+1}
=
\begin{bmatrix}
U_t & \beta_{t+1}\,\tilde{x}_{t+1}
\end{bmatrix},
V_{t+1}
=
\begin{bmatrix}
V_t & (I - V_t\,U_t^\top)\,\tilde{x}_{t+1}
\end{bmatrix}.
\]
\end{theorem}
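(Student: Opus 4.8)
The plan is to prove the claim by induction on $t$, carrying the hypothesis $L_t^{-1} = (I - U_t V_t^\top)L_0^{-1}$ at each stage and deducing the stated form for $L_{t+1}^{-1}$. The natural starting point is a clean one-step inversion of the rank-one factor in \eqref{eq:def_l_t_plus_1_rec}. Applying the Sherman--Morrison identity to $I + \alpha_{t+1}\bar x_{t+1}\bar x_{t+1}^\top$ gives $(I + \alpha_{t+1}\bar x_{t+1}\bar x_{t+1}^\top)^{-1} = I - \beta_{t+1}\bar x_{t+1}\bar x_{t+1}^\top$, with $\beta_{t+1}$ exactly the scalar defined in the statement; inverting \eqref{eq:def_l_t_plus_1_rec} then yields the recursion $L_{t+1}^{-1} = (I - \beta_{t+1}\bar x_{t+1}\bar x_{t+1}^\top)L_t^{-1}$. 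This is the workhorse identity from which everything else follows.

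For the base case, $U_0$ and $V_0$ are empty, so $U_0 V_0^\top$ is the zero matrix and the hypothesis reduces to $L_0^{-1} = (I - 0)L_0^{-1}$, which holds trivially. For the inductive step I would substitute the hypothesis into the one-step recursion to get $L_{t+1}^{-1} = (I - \beta_{t+1}\bar x_{t+1}\bar x_{t+1}^\top)(I - U_t V_t^\top)L_0^{-1}$, and then verify that the product of the two leading factors coincides with $I - U_{t+1}V_{t+1}^\top$ for the concatenated matrices prescribed in the statement.

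The heart of the argument is this algebraic matching. Expanding the block product directly gives $U_{t+1}V_{t+1}^\top = U_t V_t^\top + \beta_{t+1}\bar x_{t+1}\bigl[(I - V_t U_t^\top)\bar x_{t+1}\bigr]^\top$, and the key observation is the transpose identity $(I - V_t U_t^\top)^\top = I - U_t V_t^\top$, which rewrites the second term as $\beta_{t+1}\bar x_{t+1}\bar x_{t+1}^\top(I - U_t V_t^\top)$. Collecting terms then shows $I - U_{t+1}V_{t+1}^\top = I - U_t V_t^\top - \beta_{t+1}\bar x_{t+1}\bar x_{t+1}^\top + \beta_{t+1}\bar x_{t+1}\bar x_{t+1}^\top U_t V_t^\top$, which is exactly the expansion of $(I - \beta_{t+1}\bar x_{t+1}\bar x_{t+1}^\top)(I - U_t V_t^\top)$, closing the induction.

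The hard part will be purely bookkeeping rather than conceptual: correctly expanding the column-wise concatenation $U_{t+1}V_{t+1}^\top$ and recognizing that the asymmetric-looking new column $(I - V_t U_t^\top)\bar x_{t+1}$ in $V_{t+1}$ is engineered precisely so that the transpose identity regenerates the factor $(I - U_t V_t^\top)$ sitting to the right in the one-step recursion. Once that identity is invoked the two sides match term by term, so no genuine obstacle remains; the only ancillary check is that the Sherman--Morrison scalar $\beta_{t+1}$ is consistent with the definition of $\alpha_{t+1}$ tied to $\sqrt{1 + \|\bar x_{t+1}\|^2}$, but this constraint only fixes the numerical value of $\beta_{t+1}$ and does not otherwise enter the derivation.
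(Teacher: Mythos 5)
Your proposal is correct and follows essentially the same route as the paper's proof: Sherman--Morrison applied to the rank-one factor $I + \alpha_{t+1}\bar x_{t+1}\bar x_{t+1}^\top$, followed by induction on the representation $L_t^{-1} = (I - U_t V_t^\top)L_0^{-1}$, with the inductive step closed by exactly the algebraic expansion you describe. The only cosmetic differences are that you anchor the induction at $t=0$ with empty factors (the paper starts at $t=1$) and that you verify the concatenated factors by expanding $U_{t+1}V_{t+1}^\top$ via the transpose identity $(I - V_t U_t^\top)^\top = I - U_t V_t^\top$, whereas the paper reads the factors off from the expanded product; these are the same computation in opposite directions.
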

\begin{proof}
    The proof is provided in the Appendix, \Cref{sec:proof_rank-one update}
\end{proof}

\subsection{Batch updates}
\label{sec:batch-updates}

Now we provide a generalization of \Cref{prop: rank-one update} to the case of batch updates. In this setting parameter is updated after some number of interactions is accumulated in the system. Formally, we write $X_t \in \rset^{d \times B}$ for the batch of $B \in \nset$ concatenated contexts collected during the $t$-th interaction round with the arm $a \in \mathcal{A}$. Note that, generally speaking, $B$ depends on $a$ and $t$, but we prefer to write $B$ instead of $B_{t,a}$ for notation simplicity.

The update rule for the regularized design matrix can then be expressed as
\[
A_{t+1} = A_t + X_{t+1}X_{t+1}^\top = L_{t+1}L_{t+1}^\top,
\]
where the Cholesky-like factor $L_{t+1}$ is obtained recursively: 
\begin{equation}
\label{eq:batch_update_L_t}
L_{t+1} = L_t (I + Q_{t+1} (M_{t+1} - I) Q_{t+1}^\top),
\end{equation}
with $Q_{t+1} \in \mathbb{R}^{d \times B}$ being a matrix with orthonormal columns and 
$M_{t+1} \in \mathbb{R}^{B \times B}$ obtained using the fast symmetric factorization approach \cite{Ambikasaran} described in \Cref{app:proofs} (see \Cref{theorem: sym_fact}). 

\begin{theorem}
\label{prop: batch update_rec}
 Let $\varepsilon > 0$, $L_0 = \varepsilon^{-1/2}I \in \rset^{d\times d}$ and $U_0, V_0$ be empty matrices, and $U_t, V_t \in \rset^{d\times d_t}$. Then the factor $L_{t+1}$ defined in \eqref{eq:batch_update_L_t} can be updated by formula 
 \begin{equation}
 \label{eq:L_t_definition}
 L_{t+1}^{-1} = (I - U_{t+1}V_{t+1}^\top)L_0^{-1}.
 \end{equation}
Matrices $U_{t+1}, V_{t+1} \in \rset^{d\times(d_t + B)}$ are updated with column-wise concatenation
\begin{equation}
\label{eq:U_t_v_t_def_block}
\begin{split}
U_{t+1} &= 
\begin{bmatrix}
U_t \quad Q_{t+1} (M_{t+1} - I)M_{t+1}^{-1}
\end{bmatrix}\eqsp, \\
V_{t+1} &= 
\begin{bmatrix}
V_t \quad (I - V_tU_t^\top)Q_{t+1}
\end{bmatrix}\eqsp.
\end{split}
\end{equation}
\end{theorem}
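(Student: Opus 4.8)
The plan is to follow the same template as the rank-1 proof of \Cref{prop: rank-one update}, replacing the scalar Sherman--Morrison step by a block Woodbury inversion. First I would invert the one-step factor update \eqref{eq:batch_update_L_t} to write $L_{t+1}^{-1} = (I + Q_{t+1} Y_{t+1} Q_{t+1}^\top)^{-1} L_t^{-1}$, and then apply the Sherman--Morrison--Woodbury identity to the inner factor, with $Q_{t+1}$ playing the role of both the left and right low-rank factors and $Y_{t+1}$ the inner $B \times B$ block:
\[
(I + Q_{t+1} Y_{t+1} Q_{t+1}^\top)^{-1} = I - Q_{t+1}\bigl(Y_{t+1}^{-1} + Q_{t+1}^\top Q_{t+1}\bigr)^{-1} Q_{t+1}^\top\eqsp.
\]
The crucial simplification is that the columns of $Q_{t+1}$ are orthonormal, so $Q_{t+1}^\top Q_{t+1} = I_B$ collapses the inner bracket to $(Y_{t+1}^{-1} + I)^{-1}$. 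Writing $P_{t+1} = Q_{t+1}(Y_{t+1}^{-1} + I)^{-1} Q_{t+1}^\top$, this yields the clean one-step recursion $L_{t+1}^{-1} = (I - P_{t+1}) L_t^{-1}$, exactly paralleling the factor $I - \beta_{t}\bar x_{t}\bar x_{t}^\top$ appearing in the rank-1 case.

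Next I would run the induction on $t$. The base case is immediate: since $U_0, V_0$ are empty, $U_0 V_0^\top = 0$ and $L_0^{-1} = (I - U_0 V_0^\top) L_0^{-1}$ holds trivially. For the inductive step I would assume $L_t^{-1} = (I - U_t V_t^\top) L_0^{-1}$ and substitute into the recursion, expanding
\[
L_{t+1}^{-1} = (I - P_{t+1})(I - U_t V_t^\top) L_0^{-1} = \bigl(I - \bigl[U_t V_t^\top + P_{t+1}(I - U_t V_t^\top)\bigr]\bigr) L_0^{-1}\eqsp.
\]
The key algebraic observation, which mirrors \eqref{eq:L_t_thorough_L_0}, is that the correction term factors as an outer product: grouping $P_{t+1} = [Q_{t+1}(Y_{t+1}^{-1} + I)^{-1}]\,Q_{t+1}^\top$ and using $(I - V_t U_t^\top)^\top = I - U_t V_t^\top$ gives $P_{t+1}(I - U_t V_t^\top) = [Q_{t+1}(Y_{t+1}^{-1} + I)^{-1}]\,[(I - V_t U_t^\top) Q_{t+1}]^\top$. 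Appending these two $d \times B$ blocks to $U_t$ and $V_t$ respectively reproduces the claimed concatenations \eqref{eq:U_t_v_t_def_block}, so that $U_t V_t^\top + P_{t+1}(I - U_t V_t^\top) = U_{t+1} V_{t+1}^\top$, which closes the induction.

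The main obstacle I anticipate is not the algebra but justifying that the expression $(Y_{t+1}^{-1} + I)^{-1}$ is well defined, i.e.\ that $Y_{t+1}$ is invertible and that $Y_{t+1}^{-1} + I$ is nonsingular. This should follow from the properties of the symmetric factorization of \Cref{theorem: sym_fact} that produces $Q_{t+1}$ and $Y_{t+1}$ from $X_{t+1}$, and I would cite that construction both to guarantee the required invertibility and to supply the orthonormality of the columns of $Q_{t+1}$, which is precisely what makes the Woodbury bracket collapse so cleanly. A secondary bookkeeping point is tracking the growing column dimension $d_{t+1} = d_t + B$ so that the block concatenations are well typed, but this is routine.
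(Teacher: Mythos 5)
Your proof is correct and follows essentially the same route as the paper's: invert the one-step factor update \eqref{eq:batch_update_L_t}, apply the Sherman--Morrison--Woodbury identity (with $Q_{t+1}^\top Q_{t+1} = I_B$ collapsing the inner bracket), and close the induction by grouping the correction term into exactly the concatenated blocks of \eqref{eq:U_t_v_t_def_block}. The invertibility caveat you raise is legitimate but is also left implicit in the paper; it can be discharged by observing that $(Y_{t+1}^{-1}+I)^{-1} = I - M^{-1}$, where $M$ is the Cholesky factor from \Cref{theorem: sym_fact} (always invertible since $T \succeq I$), so the update remains well defined even when $X_{t+1}$ is rank-deficient and $Y_{t+1}$ is singular.
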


Proof of \Cref{prop: batch update_rec} is provided in \Cref{sec:proof_batch update_rec}. The shape $d_t$ of the matrices $U_{t}$ and $V_{t}$ can be inferred from the representations \eqref{eq:L_t_definition} - \eqref{eq:U_t_v_t_def_block}.

\subsection{Low-rank correction for $A_t^{-1}$}
Note that the matrices $U_t$ and $V_t$ in \eqref{eq:U_t_v_t_def_block} expand in the number of columns with time $t$, leading to excessive memory demands and computational overhead. We now motivate why the correction term for 
$\varepsilon^{-1} I - A_t^{-1}$ can be well approximated by a low-rank matrix. Consider the empirical covariance matrix
$\sum_{s=1}^t x_s x_s^\top,$ and let its SVD be given by
\[
\textstyle 
\sum_{s=1}^t x_s x_s^\top = E_x \Sigma_x E_x^\top.
\]
Then the regularized design matrix and its inverse admit the representations
\[
A_t = E_x (\varepsilon I + \Sigma_x) E_x^\top,
\qquad
A_t^{-1} = E_x (\varepsilon I + \Sigma_x)^{-1} E_x^\top.
\]
Then the correction term $R_t := \varepsilon^{-1} I - A_t^{-1}$ writes as
\[
R_t
= E_x \Bigl(\varepsilon^{-1} I - (\varepsilon I + \Sigma_x)^{-1}\Bigr) E_x^\top.
\]
The diagonal entries of the matrix inside the parentheses are given by
\[
\textstyle 
\varepsilon^{-1} - \frac{1}{\varepsilon + \sigma_i}
= \frac{\sigma_i}{\varepsilon(\varepsilon + \sigma_i)}
\le \frac{\sigma_i}{\varepsilon^2}.
\]
Therefore, eigencomponents corresponding to small eigenvalues $\sigma_i$ contribute negligibly to $R_t$. Moreover, approximating the empirical covariance $\sum_{s=1}^t x_s x_s^\top$ with rank $r$ before inversion $A_t$ or approximating only the correction $R_t$ with rank $r$ in decomposition for $A_t^{-1}$ leads to the same approximation of $A_t^{-1}$.
This observation supports modeling $R_t$ using a low-rank representation when the empirical covariance matrix $\sum_{s=1}^t x_s x_s^\top$ has low effective rank, a property commonly employed in the literature on sketching methods \cite{kuzborskij2019efficient,chen2020efficient}. Below we describe how to maintain a low-rank approximation of $R_t$ by controlling the ranks of $U_t$ and $V_t$ using the Projector-Splitting Integrator \cite{lubich2014projector}. The motivation for this particular approximation method is discussed in the next section.

\section{Projector-Splitting Integrator}
\label{sec:psi}
We now aim to improve LinUCB using the representation in \eqref{eq:L_t_definition}–\eqref{eq:U_t_v_t_def_block}. In this form, the matrices $U_t$ and $V_t$ grow with time. At the same time, the key element of the representation \eqref{eq:L_t_definition} is the time-dependent matrix $D_t = U_t V_t^\top \in \rset^{d \times d}$, whose evolution defines the dynamics of the representation. 
A natural approach is to approximate $D_t$ via a  rank-$r$ truncated SVD with,
\[
\textstyle
D_t = \bar{U}_t \Sigma_{t} \bar{V}_t^{\transpose}\eqsp.
\]
 At the same time, the orthogonal factors $(\bar{U}_{t},\bar{V}_t)$ and $(\bar{U}_{t+1},\bar{V}_{t+1})$, corresponding to the matrices $D_t$ and $D_{t+1}$, are not guaranteed to be close, which might yield additional computational instability. Instead, we propose to rely on the projector-splitting integrator (PSI) approach of \cite{lubich2014projector}, which constructs a dynamical low-rank approximation by solving
\[
\textstyle 
\norm{\dot{D}_t - \dot{\tilde{D}}_t} \to \min_{\tilde{D}_t: \, \operatorname{rank}{\tilde{D}}_t = r}\,,
\]
where $\dot{D}_t = \frac{\rmd}{\rmd t}D_t$. The PSI method enables efficient iterative updates without explicitly forming $\tilde{D}_t$. The approximation is maintained in factorized form,
\[
\textstyle
\tilde{D}_t = \tilde{U}_{t} S_t \tilde{V}_t^{\top}\eqsp,
\]
where the factors $\tilde{U}_{t},\tilde{V}_{t} \in \rset^{d \times r}$ have orthonormal columns, and $S_t \in \rset^{r \times r}$ is invertible. Then we obtain factors $(\tilde{U}_{t+1}, S_{t+1}, \tilde{V}_{t+1}^{\top})$, such that 
\[
\tilde{D}_{t+1} = \tilde{U}_{t+1} S_{t+1} \tilde{V}_{t+1}^{\top}\eqsp.
\]
We provide implementation details in \Cref{alg: PSI}. 
\par 
We integrate this procedure into LinUCB with batch updates as follows. During training, the factors $U_t$ and $V_t$ are incrementally expanded using \eqref{eq:U_t_v_t_def_block} or \eqref{eq:rank1_UV} until their number of columns exceed a predefined threshold $r$. At this time $t_0$, we compute a rank-$r$ SVD of $U_{t_0}V_{t_0}^\top$ to initialize the PSI factors. Subsequent updates apply \Cref{alg: PSI} to maintain a fixed-rank approximation. The update increments $\Delta D_{t+1}$ are obtained directly from the LinUCB update rules. For rank-one updates,
\begin{equation}
    \Delta D_{t+1} = \beta_{t+1} \tilde{x}_{t+1} \tilde{x}_{t+1}^\top (I - U_t V_t^\top);
\end{equation}
and from \eqref{eq:U_t_v_t_def_block} for the batch update case:
\begin{equation}
    \Delta D_{t+1} = Q_{t+1}(M_{t+1}-I)M_{t+1}^{-1} Q_{t+1}^\top (I - U_t V_t^\top).
\end{equation}
The complete integration of PSI into LinUCB is summarized in \Cref{alg:linucb-psi}. We highlight that we do not need to form the matrices $L_{t}^{-1}$ given in \eqref{eq:L_t_definition} explicitly. 

\textbf{Complexity analysis.} We analyze the computational complexity of the proposed PSI-LinUCB algorithm during training. The complexity of the PSI update
\[
U_{t+1}, V_{t+1} =\textit{PSI}(U_{t},V_{t},\;\Delta D_{t+1}) 
\]
does not depend on the number of interactions $B = B_{t,a}$ (that is, interactions with arm $a$ inside batch number $t$) and scales as $\mathcal{O}(dr^2)$. This factor comes from the QR decomposition applied to $\rset^{d \times r}$ matrices. When the rank exceeds the threshold, an SVD is computed once per arm (line~11 in \Cref{alg: update_hand}) with complexity $\mathcal{O}(d(r+B)^2 + (r+B)^3)$, obtained via QR decompositions of factors of size at most $\rset^{d \times (r+B)}$. Additionally, line~5 in \Cref{alg: update_hand} incurs a cost of $\mathcal{O}(dB^2 + B^3)$ due to QR and Cholesky decompositions. Thus, the overall complexity of handling blocks in the \textit{Update\_arm} algorithm (\Cref{alg: update_hand}) is 
\[
\mathcal{O}(d(r^2+B^2) + r^3+B^3)\eqsp,
\]
as soon as this arm has accumulated at least $r$ interactions.

\textit{Inference stage.} To compute the bonus term in \eqref{eq:arm_choice_lin_ucb} we use the decomposition defined earlier:
\begin{equation}
\label{eq:bonus_PSI}
\begin{aligned}
    \sqrt{x_{t,a}^\top A_t^{-1} x_{t,a}} 
    &= \sqrt{x_{t,a}^\top (L_t L_t^\top)^{-1} x_{t,a}} \\
    &= \| (I - U_t V_t^\top)L_0^{-1} x_{t,a} \|
\end{aligned}
\end{equation}
Then at time $t$, the arm is selected by formula:
\begin{equation}
\begin{split}
a_t = \arg\max_{a \in \mathcal{A}_t} 
\left( 
\theta_a^\top x_{t,a} + \alpha \| (I - U_t V_t^\top)L_0^{-1} x_{t,a} \|
\right)\eqsp.
\end{split}
\end{equation}
The computational complexity of this operation is $\mathcal{O}(dr)$ per arm.
\begin{table}[t!]
\centering
\footnotesize
\caption{Computational complexity comparison across algorithms}
\label{tab:complexity}
\setlength{\tabcolsep}{8pt}
\renewcommand{\arraystretch}{1.2}

\begin{tabular}{lcc}
\hline
\textbf{Algorithm} & \textbf{Time cost per round} & \textbf{Space} \\
\hline
LinUCB    & $O(d^2)$              & $O(d^2)$ \\
CBRAP                & $O(dm + m^3)$         & $O(dm)$  \\
CBSCFD                & $O(dm)$         & $O(dm)$  \\
DBSLinUCB           & $O(dl_{B_t})$               & $O(dl_{B_t})$  \\

PSI-LinUCB           & $O(dr)$               & $O(dr)$  \\

\hline
\end{tabular}
\end{table}

\begin{remark}
\label{rem}
The primary computational cost of our algorithm arises during the training phase, which requires on average $\mathcal{O}(d(r^2/B + B) + r^3/B + B^2)$ operations per interaction. By choosing $B \propto r$ and assuming $r \ll d$, this reduces to average complexity of $\mathcal{O}(dr)$ operations per iteration. To our knowledge, the $\mathcal{O}(dr)$ average complexity is the best computational cost achieved by LinUCB-based algorithms. This is comparable with the most sample-efficient implementations of existing algorithms \cite{chen2020efficient, wen2024matrix}, summarized in Table~\ref{tab:complexity}.
\end{remark}


\textbf{Theoretical properties.} Assume that the matrix $U_{t}V_{t}^{\top}$ from the exact LinUCB update representation \Cref{prop: batch update_rec} has rank not exceeding $r$ for any $t \in \{0,\ldots,T\}$. Then theoretical guarantees for PSI integrator applied with the same rank $r$ \cite{lubich2014projector}[Theorem~4.1] ensures exact integration, that is, \Cref{alg: PSI} maintains the exact $L_t^{-1}$ and and hence the exact inverse $A_t^{-1} = L_t^{-\top}L_t^{-1}$. Consequently, the estimator $\hat\theta_t=A_t^{-1}b_t$ and the confidence
bonus in \eqref{eq:bonus_PSI} exactly matches those of the standard LinUCB algorithm \eqref{eq:arm_choice_lin_ucb}. In this setting, PSI-LinUCB admits the standard regret scaling of LinUCB under linear bandit assumptions of order $\tilde O(d\sqrt{T})$ with high probability after $T$ iterates, see \cite{abbasi2011improved}. 

Investigating the setting when the exact LinUCB algorithm matrix $U_{t}V_{t}^{\top}$ has larger rank is an important direction for future work. Existing regret analysis of LinUCB and its sketching variants \cite{kuzborskij2019efficient} rely on the monotonicity properties of the estimates of feature covariance matrix. In our setting, we directly approximate the inverse regularized design matrix, and such monotonicity does not hold, making standard techniques unavailable.

\begin{algorithm}
\caption{\textit{PSI-LinUCB (LinUCB training with PSI)}\, \, 
}
\label{alg:linucb-psi}
\begin{algorithmic}[1]
\REQUIRE train\_data = $[batch_0, \dots, batch_{n-1}]$, batch\_size, $U\_0,\, V\_0$
\STATE $L_0^{-1} = \varepsilon^{-1/2}I$ 
\FOR{t in \{0,\ldots,n-1\}}
  \STATE $X_{t,a} = []$, $R_{t,a} = []$
  \FORALL{$(u,a,r)$ in $batch_{t}$}
    \STATE $X_{t,a}.append\,[x_{t,a}]$, $R_{t,a}.append\,[r]$   
  \ENDFOR
  \FOR{each arm $a$ in $batch_t$}
    \STATE $U_{t+1,a},V_{t+1,a},\theta_{t+1,a} \gets $ \textit{Update\_arm($a$)}
  \ENDFOR
\ENDFOR
 \STATE \textbf{return} $\theta_{t,a}$, $U_{t,a}, V_{t,a}$ for each arm $a \in \mathcal{A}$. 
\end{algorithmic}
\end{algorithm}

\begin{algorithm}
\caption{\textit{Update\_arm}($a$)\,}
\label{alg: update_hand}
\begin{algorithmic}[1]
    \REQUIRE $b_{t,a}$, $U_{t,a}$, $V_{t,a}$, $X_{t,a}$, $R_{t,a}$
    \STATE Set $b_t = b_{t,a}$, $U_{t} = U_{t,a}$, $V_{t} = V_{t,a}$, $X_t = X_{t,a}$, $R_t = R_{t,a}$, \\
    \COMMENT{Omitting index $a$ for simplicity}
    \STATE $b_{t+1} = b_t + X_tR_t$
    \STATE $L_{t}^{-1}=(I - U_{t}\,V_{t}^\top)L_0^{-1}$
    \linecomment{not form $L_{t}^{-1}$ explicitly}
    \STATE $\bar X_{t+1} =L_t^{-1} X_t$
    \STATE $C_{t+1}, Q_{t+1} \gets$ \textit{Calculate\_C\_and\_Q}$(\bar X_{t+1})$
    \IF{$U_t.\text{shape}[1] < r$}
        \STATE $U_{t+1} \;\gets\; [\,U_t,\;Q_{t+1} C_{t+1}]$
        \STATE $V_{t+1} \;\gets\; [\,V_t,\, (I -V_tU_t^\top)\; Q_{t+1}\,]$
    \ELSE
        \IF{first time $U_t.\text{shape}[1] \geq  r$}
            \STATE $\tilde U_{t+1}S_{t+1}\tilde V_{t+1}^\top = SVD(U_tV_t^T)$
            \STATE $U_{t+1} \;\gets\; \tilde U_{t+1}S_{t+1}$
            \STATE $V_{t+1} \;\gets\; \tilde V_{t+1}$
        \ELSE
            \STATE $\Delta D_{t+1} = 
        \,Q_{t+1}\, C_{t+1} \, Q_{t+1}^\top\,(I - U_t V_t^\top),$ \\ \COMMENT{We do not form $\Delta D_{t+1}$ explicitly}
            \STATE $ U_{t+1}, V_{t+1} =\textit{PSI}(U_{t},V_{t},\;\Delta D_{t+1})$
        \ENDIF
    \ENDIF
    \STATE $L_{t+1}^{-1}=(I - U_{t+1}\,V_{t+1}^\top)L_0^{-1}$
    \linecomment{We do not form $L_{t+1}^{-1}$ explicitly}
    \STATE $\theta_{t+1} = L_{t+1}^{-\top}\,L_{t+1}^{-1}\;b_{t+1}$
    \STATE \textbf{return} $U_{t+1}, V_{t+1}$, $\theta_{t+1}$
\end{algorithmic}  
\end{algorithm}

\begin{algorithm}
\caption{\textit{PSI(Projector-Splitting Integrator)}\, \, 
}
\label{alg: PSI}
\begin{algorithmic}[1]
 \REQUIRE $U_t,\,V_t, \Delta D_{t+1}$, $U_t,V_t \in \rset^{d \times r}$, $\Delta D_{t+1} \in \rset^{d \times d}$
  \STATE $K_1 \;\gets\; U_t\;+\;\Delta D_{t+1}\,V_t$
  \STATE $(\tilde U_1,\;\widetilde{S}_1)\;\gets\;\mathrm{QR}(K_1)$ 
  \STATE $\widetilde{S}_0 \;\gets\; \widetilde{S}_1 \;-\; \tilde U_1^\top\,\Delta D_{t+1}\,V_t$
  \STATE $L_1 \;\gets\; V_t\,\widetilde{S}_0^\top \;+\;\Delta D_{t+1}^\top\,\tilde U_1$
  \STATE $(\tilde V_1,\;S_1^\top)\;\gets\;\mathrm{QR}(L_1)$ 
\ENSURE $\tilde U_1S_1,\,\tilde V_1$ 
\end{algorithmic}
\end{algorithm}

\begin{algorithm} 
\caption{\textit{Calculate\_C\_and\_Q}  \, \, 
}
\begin{algorithmic}[1]
    \REQUIRE $\bar X_{t+1} \in \rset^{d \times B}$
    \STATE $Q_{t+1}, R_{t+1} \;\gets\;\mathrm{QR}(\bar X_{t+1})$   
    \STATE $T_{t+1} =I + R_{t+1}\,R_{t+1}^\top$ \linecomment{$T_{t+1} \in \rset^{B \times B}$}
    \STATE $M_{t+1} \gets \mathrm{Cholesky}(T_{t+1})$  \linecomment{Find $M_{t+1} \in \rset^{B\times B}$ such that $T_{t+1} = M_{t+1}\,M_{t+1}^T$}   
    \STATE $C_{t+1} = (M_{t+1} - I)M_{t+1}^{-1}$
    \ENSURE  $C_{t+1}, Q_{t+1}$
\end{algorithmic}
\end{algorithm}

\section{Experimental Setup}
\label{sec:exp}

We evaluate the proposed PSI-LinUCB against five baselines covering exact updates, sketching and random-projection approaches:
\begin{itemize}[noitemsep,topsep=0pt,leftmargin=0em]
    \item LinUCB~\cite{li2010contextual}: the standard LinUCB algorithm with batched updates;
    \item LinUCB Classic ~\cite{Angioli2025Efficient}: LinUCB with exact rank-1 inverse updates via the Sherman--Morrison formula;
    \item CBSCFD ~\cite{chen2020efficient}:  a sketching-based method that approximates the feature covariance (design) matrix.
    \item CBRAP~\cite{yu2017cbrap}: method based on random projections applied to feature covariance (design) matrix.  
    \item DBSLinUCB~\cite{wen2024matrix}: an adaptive sketching method that dynamically adjusts the sketch size over time.
\end{itemize}
We provide more details about existing methods in the Appendix, \Cref{sec:related-work}.

\begin{table}[t!]
\centering
\small
\setlength{\tabcolsep}{1pt} 
\caption{Information about the datasets used for validation}
\label{tab:dataset_statistics}
\begin{tabular}{l|rrrr}
\toprule
\textbf{Dataset} & \textbf{\#Users} & \textbf{\#Items} & \textbf{\#Interactions} & \textbf{Density} \\
\midrule
Magazine Subscriptions & 60,100   & 3,400    &   71,500   & 0.35\% \\
Health \& Personal Care & 461,700  & 60,300   &  494,100   & 0.02\% \\
All Beauty & 632,000  & 112,600  &  701,500   & 0.01\% \\
MovieLens 1M & 6,040 & 3,706 & 1,000,209 & 4.18\% \\
\bottomrule
\end{tabular}
\end{table}

\subsection{Training Protocol}
\label{exp:dev}
In order to run our algorithm on the datasets, we convert them into the bandit style environment following the pipeline below: 
\begin{itemize}
[noitemsep,topsep=0pt,leftmargin=0em]
\item \textit{Warm-up phase}: Initial training on 80\% of historical data
\item \textit{Online training phase}: Sequential learning on the remaining 20\% of data, divided into equal-sized temporal intervals simulating monthly updates. The model is incrementally trained on each month's data.
\item \textit{Evaluation}: Model evaluated after each monthly training update to assess recommendation quality over time.
\end{itemize}

In the sections below, we consider the datasets described in \Cref{tab:dataset_statistics}. The LinUCB algorithm is implemented with batch processing to ensure scalability with respect to high-dimensional contextual features. User-item features are extracted via SVD decomposition of the user-item interaction matrix, with rank $r'$ selected through spectral analysis during warm-up. To obtain high-dimensional contexts, the context vectors are constructed as outer products of user and item embeddings. All hyperparameters are tuned via cross-validation for each dataset and model, see \Cref{appendix:tuning} in Appendix.
\section{Scalability results}

\textbf{Increasing context size.} To ensure the scalability of PSI-LinUCB, we vary the dimensions of context vectors $x_{t,a}$. We infer user-item features from SVD with varying rank $r'$ on the subset of the Amazon Health dataset, and measure the computational time and memory usage. As shown in \Cref{fig:context_mag}, LinUCB and LinUCB Classic fail to scale beyond moderate context sizes due to memory and time requirements for storing and inverting the full matrix $A_{t,a}$. In contrast, PSI-LinUCB operates efficiently even for large $d$ by maintaining only low-rank factors with $\mathcal{O}(dr)$ complexity. As shown in \Cref{tab:hit_rate_d}, PSI-LinUCB achieves identical quality to LinUCB where both are feasible, while continuing to operate at larger context sizes. Warm-up phase results are provided in Appendix~\ref{app:experiments}, see \Cref{fig:context_mag_train}.

\begin{figure}[t!]
    \centering
    \includegraphics[width=1\columnwidth]{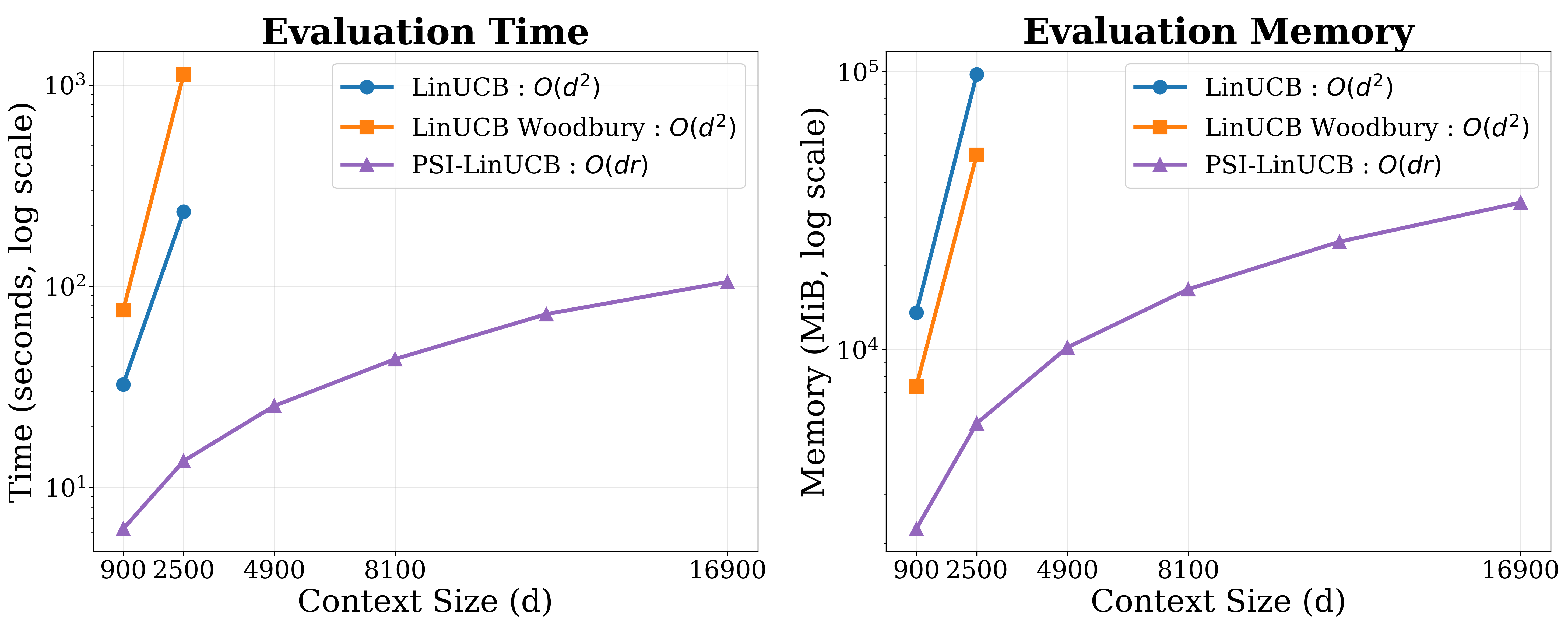}
    \caption{Performance comparison across different context sizes on Amazon Health dataset.}
    \label{fig:context_mag}
\end{figure}

\begin{table}[t!]
\centering
\small
\caption{Hit Rate comparison across context dimensions $d$}
\label{tab:hit_rate_d}
\begin{tabular}{c|ccc}
\toprule
$d$ & LinUCB & LinUCB Woodbury & PSI-LinUCB \\
\midrule
900  & 0.026 & 0.026 & 0.026 \\
2500 & 0.034 & 0.034 & 0.034 \\
4900 & - & - & 0.044 \\
16900 & - & - & 0.050 \\
\bottomrule
\end{tabular}
\end{table}

\textbf{Increasing number of arms.} We fix the optimal context dimensions and PSI rank for each dataset (Health, Beauty, and Magazine Subscriptions), then gradually increase the number of arms from a small subset to the full action space, selecting items with the highest observation frequency at each stage. As shown in \Cref{fig:arms_beauty}, exact implementations of LinUCB become grossly memory-inefficient as number of arms increases, while PSI-LinUCB remains computationally tractable. Additional results on time and memory consumption during the warm-up phase on the Beauty dataset (Figure~\ref{fig:arms_beauty_train}), as well as further scalability experiments are reported in the Appendix, ~\Cref{app:experiments}.

\begin{figure}[t!]
\centering
\includegraphics[width=1\columnwidth]{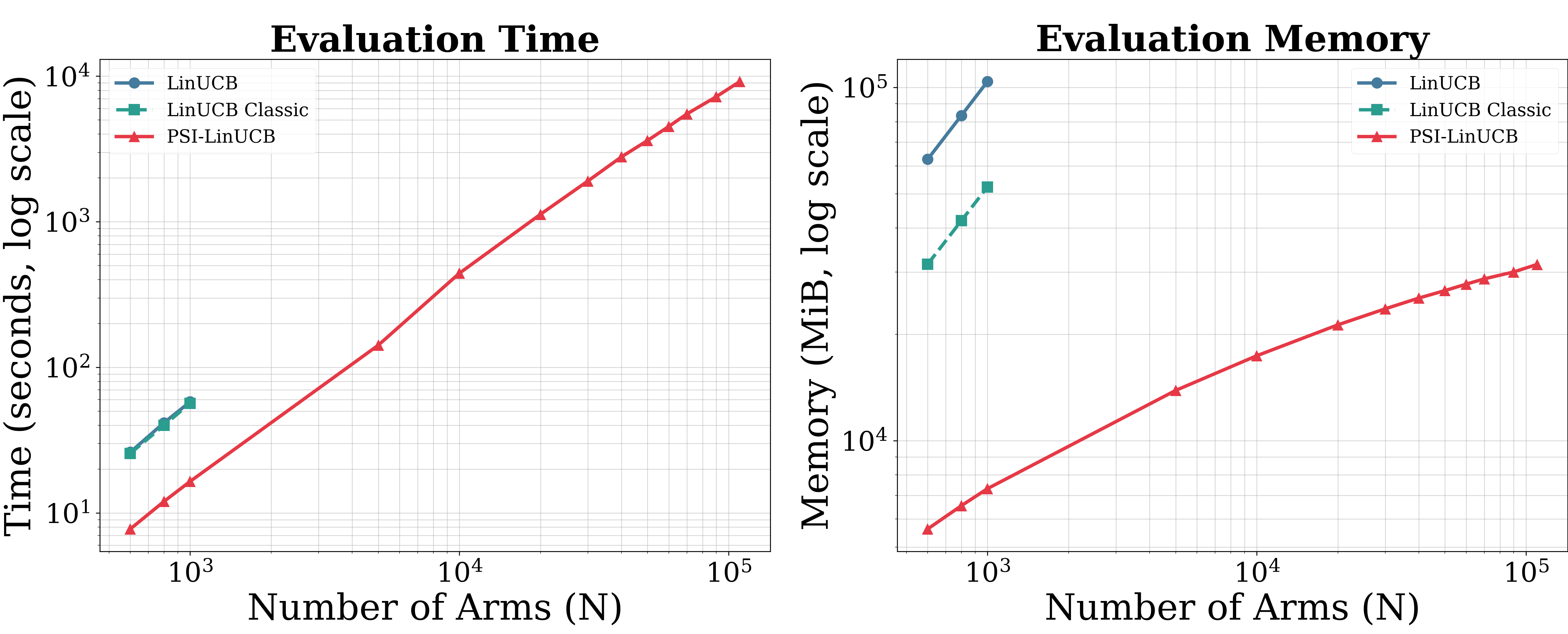}
\caption{Algorithm scaling with number of arms on Beauty dataset.}
\label{fig:arms_beauty}
\end{figure}

\textbf{LinUCB quality.} We analyze how the rank parameter affects recommendation quality by varying the PSI rank while keeping context dimensions and number of arms fixed. At each configuration, we measure hit rate of our algorithm, popular and random baselines relative to the one of LinUCB. \Cref{fig:quality_comparison_amazon} shows that PSI-LinUCB achieves recommendation quality comparable to classical LinUCB starting from relatively low ranks, without requiring a full-rank matrix. Additional results are provided in Appendix, see \Cref{fig:quality_comparison_other}.

\begin{figure}[t!]
  \centering
  \begin{subfigure}[b]{0.49\textwidth}
    \includegraphics[width=\textwidth]{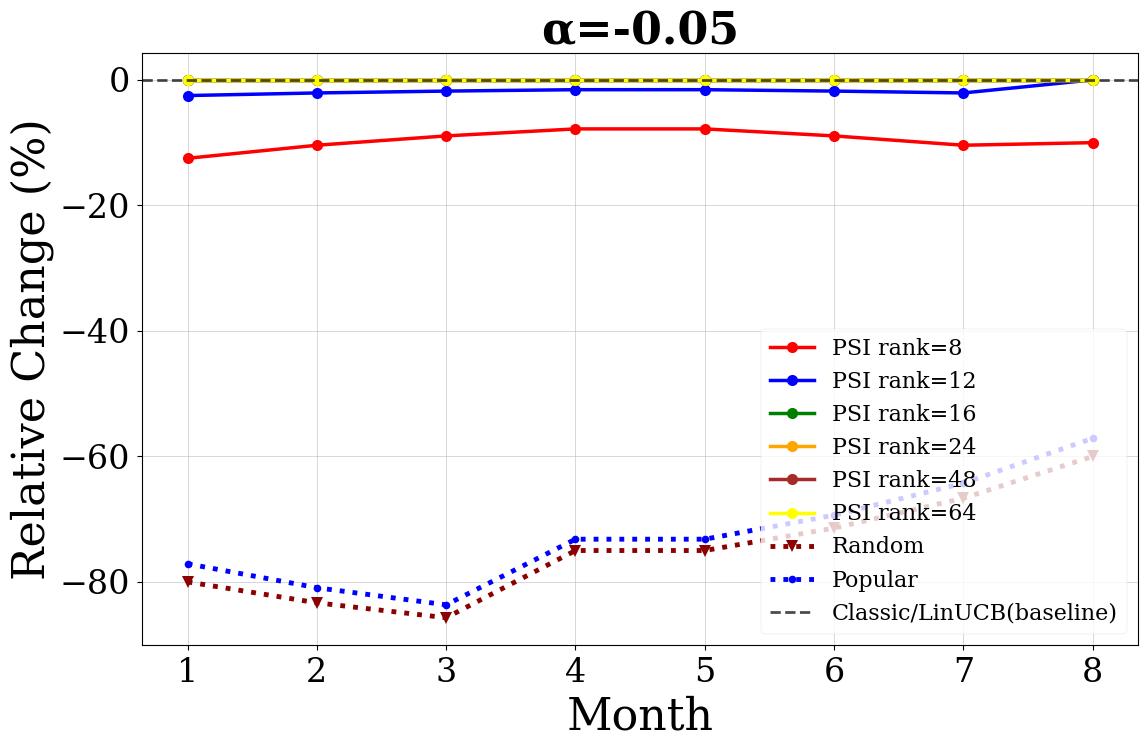}
    \caption{Amazon Health}
    \label{fig:quality_health}
  \end{subfigure}
  \hfill
  \begin{subfigure}[b]{0.49\textwidth}
    \includegraphics[width=\textwidth]{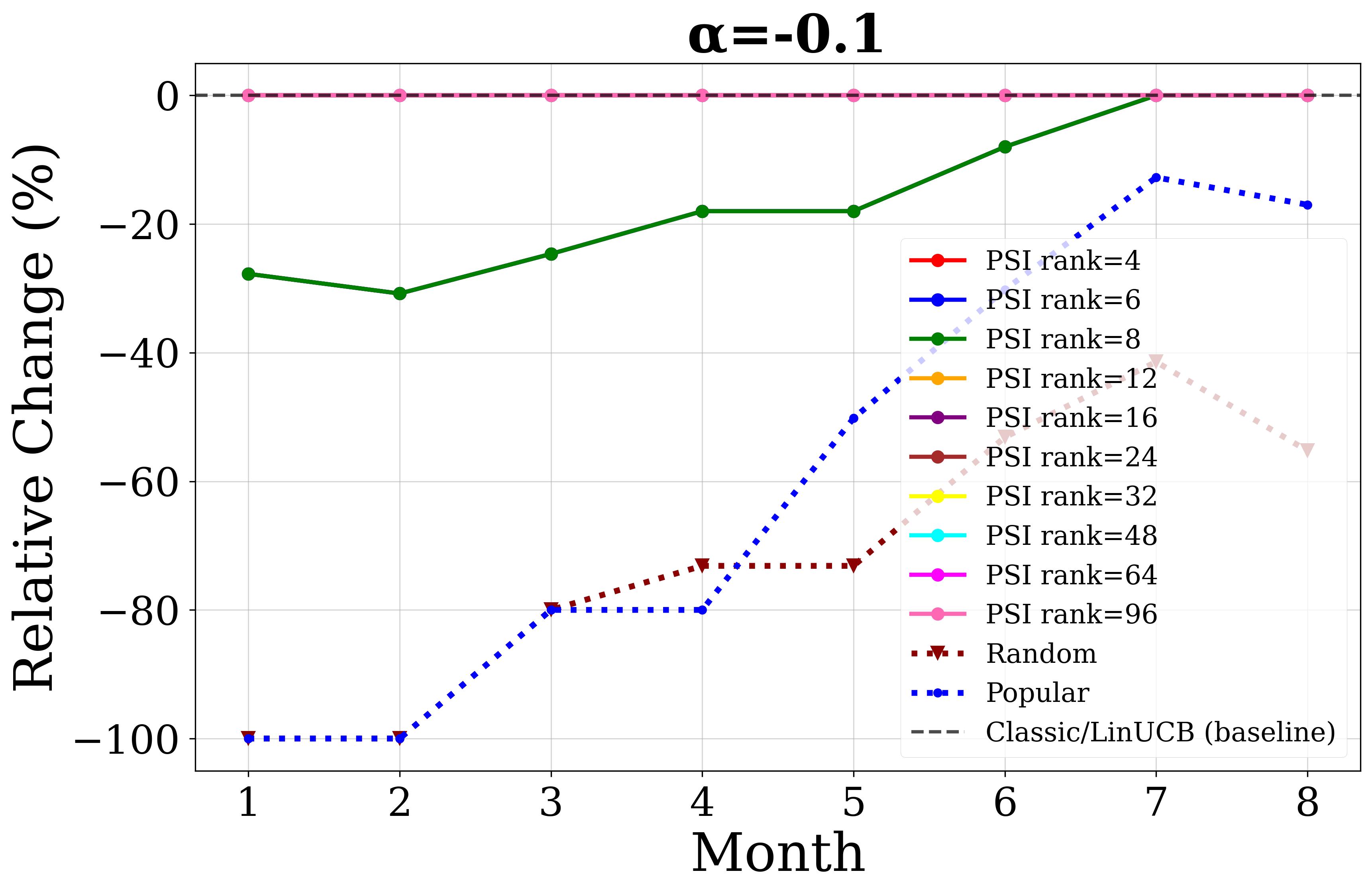}
    \caption{Amazon All Beauty}
    \label{fig:quality_beauty}
  \end{subfigure}
  \caption{Quality comparison by months: PSI-LinUCB vs LinUCB on Amazon datasets.}
  \label{fig:quality_comparison_amazon}
\end{figure}

\section{Comparison with scalable baselines}
\subsection{Online setting}
\label{sec:exp6}
Following the experimental setup of~\cite{kuzborskij2019efficient} and~\cite{chen2020efficient}, we evaluate our method on an online classification problem turned into contextual bandit setting. To maintain compatibility with the mentioned works, in this subsection we switch to the version of LinUCB with shared parametrization, see e.g. \cite{chen2020efficient} with a single parameter vector and design matrix across all arms. 

Given a dataset with $K$ classes, we mark of them as a target. In each round, the environment randomly draws one sample from each class, forming a context set. The learner selects one sample and receives a reward of $1$ if the selected sample belongs to the target cluster, and $0$ otherwise. Note that in this binary reward setting, cumulative number of mistakes corresponds to the cumulative regret. We conduct experiments on two benchmark datasets: MNIST~\cite{LeCun1998Gradient} and CIFAR-10~\cite{Krizhevsky2009Learning}. For fair comparison we tuned the optimal rank/sketch size for all algorithms via cross-validation by minimizing mistakes over $T$ rounds. The dataset statistics are summarized in the Appendix, \Cref{tab:cifar_mnist}, the detailed description of the experimental setup and hyperparameter selection is provided in \Cref{sec:exp5}. Since this experiment focuses on online learning, we use the rank-1 version of PSI-LinUCB (see \Cref{alg:linucb-psi-rank1} in Appendix), which is consistent with the update scheme employed by CBSCFD~\cite{chen2020efficient}, CBRAP~\cite{yu2017cbrap}, and DBSL~\cite{wen2024matrix}.

The results, presented in Figure~\ref{fig:mnist_shared}, demonstrate that our PSI-LinUCB algorithm achieves competitive performance with CBSCFD on MNIST while being significantly faster and outperforms all other baseline methods in both quality and computational efficiency. Results on CIFAR dataset are presented in Appendix, \Cref{app:experiments}, Figure~\ref{fig:cifar_shared}).
\begin{figure}[t!]
\centering
\includegraphics[width=\columnwidth]{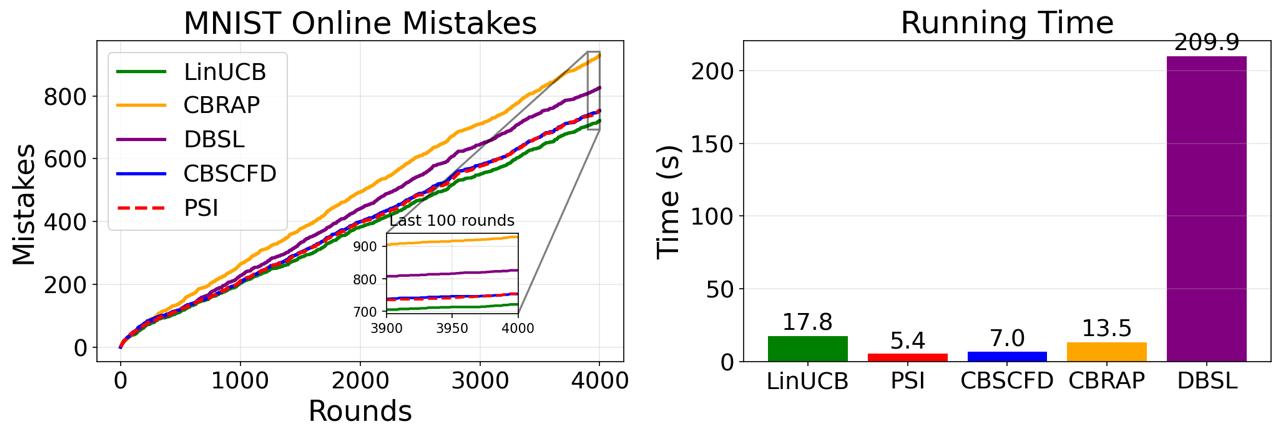}
\caption{Online classification results on the MNIST dataset.}
\label{fig:mnist_shared}
\end{figure}

\subsection{Approximation quality of the inverse matrix}\label{sec:approx_inverse}
Figure~\ref{fig:approx_inverse_mnist} shows the dependence of the relative approximation error of the inverse design matrix $A_{t}^{-1}$ on final iteration for different ranks (sketch sizes for CBSCFD). The approximation error of PSI-LinUCB and CBSCFD decreases as the rank increases. However, PSI-LinUCB consistently achieves lower error than CBSCFD for the same rank size.
The corresponding experiments on CIFAR (see Figure~\ref{fig:approx_inverse}) and real-world datasets (see Figures~\ref{fig:approx_inverse_instr}, \ref{fig:approx_inverse_mag}) are presented in Appendix, ~\Cref{app:experiments}.

 \begin{figure}[t]
    \centering
    \includegraphics[width=\columnwidth]{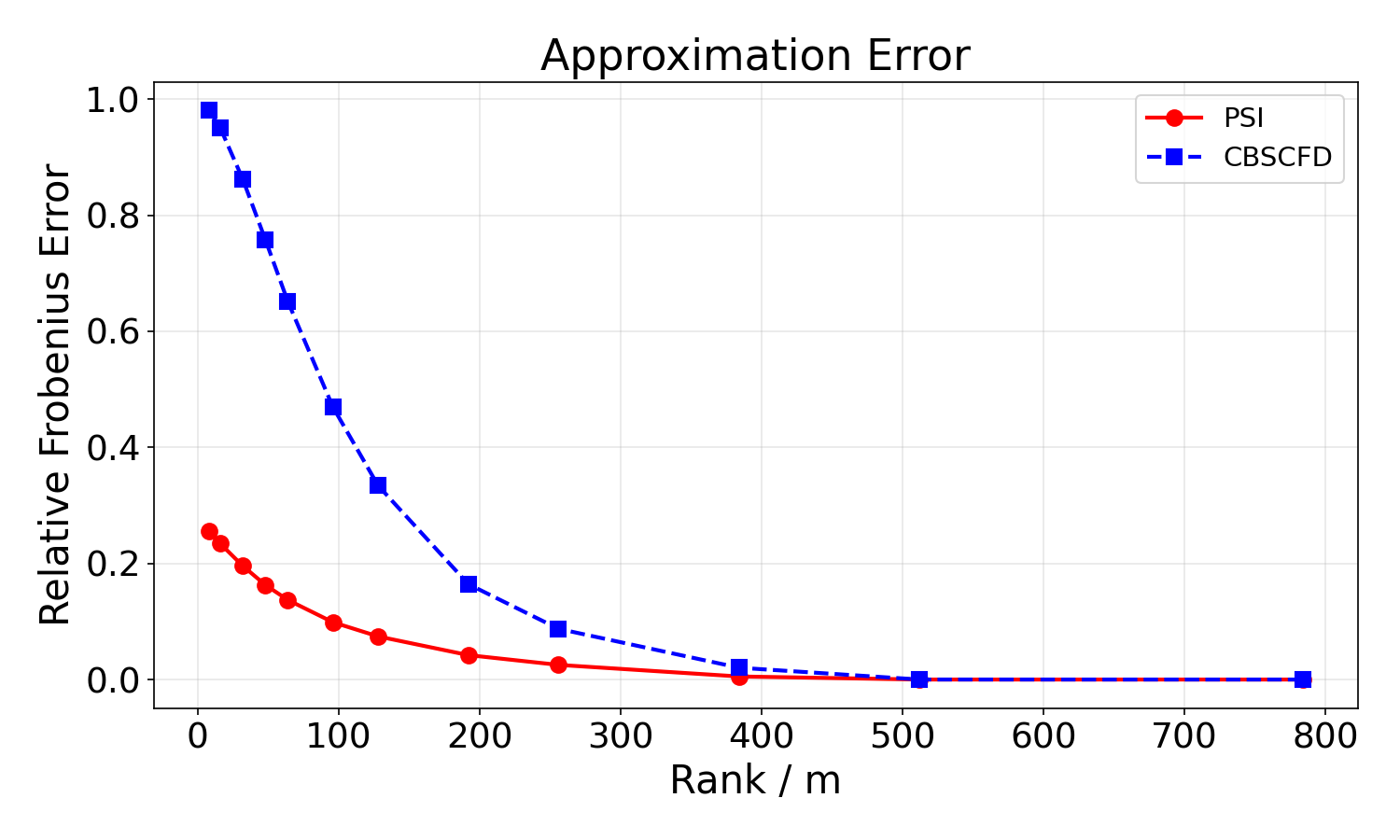}
    \caption{Approximation error of $A_{t}^{-1}$ on MNIST}
    \label{fig:approx_inverse_mnist}
\end{figure}

\subsection{Performance on real-world datasets}\label{sec:exp5}

This experiment compares PSI-LinUCB with CBSCFD, CBRAP and DBSLinUCB across multiple datasets. In the original methods, a shared design matrix $A_{t}$ is used for all arms; we extend this by training a separate matrix $A_{t,a}$ for each arm  $a$, as proposed in \cite{li2010contextual}. Additionally, CBSCFD and CBRAP support only rank-1 updates, while PSI-LinUCB generalizes to batch updates. The experiment follows the setup described in~\Cref{exp:dev}, for full details see~\Cref{appendix:tuning} in Appendix.

Table~\ref{tab:quality_metrics} presents quality metrics across three datasets (\textit{Amazon Magazine Subscriptions}, \textit{Amazon All Beauty}, \textit{Amazon Health}), and Table~\ref{tab:efficiency_metrics} reports training time, per-user prediction time and memory usage. CBRAP achieves quality comparable to PSI-LinUCB and CBSCFD but requires a larger sketch size $m$, resulting in higher memory and time costs. This explains inherently large training and evaluation time of CBRAP on the Beauty dataset. DBSLinUCB introduces additional tuning complexity, requiring joint search over the initial block size $l_0$ (which grows exponentially) and the error parameter $\varepsilon$, with interdependent effects. For example, DBSLinUCB fails on the Health dataset with all checked hyperparameters, see \Cref{tab:quality_metrics}. CBSCFD shows comparable quality to PSI-LinUCB, yet we point out its main drawback as \emph{instability of the quality with respect to sketch size $m$}.

\Cref{fig:cbscfd_ranks_amazon} and \Cref{fig:psi_ranks_amazon} show the relationship between the parameter $m$ and hit rate for CBSCFD and PSI-LinUCB, respectively. The results demonstrate significant variation in the optimal $m$ values across different real-world datasets, with unpredictable behavior: even small changes in $m$ lead to different results. At the same, for PSI-LinUCB the dependence of quality on approximation rank $r$ is monotone as the rank increases, which is desirable for tuning this (arguably, most important) hyperparameter in practice. We provide additional experiments on other datasets in the Appendix, \Cref{app:experiments} (see Figure~\ref{fig:cbscfd_ranks_other}).

\begin{table}[t!]
\centering
\small
\caption{Quality metrics on different datasets}
\label{tab:quality_metrics}
\setlength{\tabcolsep}{2pt}
\renewcommand{\arraystretch}{1.05}
\begin{tabular}{ll|ccc|c}
\hline
{Dataset} & {Algorithm} & {Hit@10} & {NDCG@10} & {MRR@10} & {Cov.} \\

\hline
\multirow{4}{*}{\rotatebox{90}{Magazine}} 
  & PSI-LinUCB & \textbf{0.531} & 0.430 & 0.398 & \textbf{0.038}\\
  & CBSCFD & 0.526 & \textbf{0.449} & \textbf{0.425} & 0.031 \\
  & CBRAP & 0.528 & 0.420 & 0.386 & 0.027 \\
  & DBSLinUCB & 0.523 & 0.438 & 0.411 & 0.028 \\
\hline

\multirow{4}{*}{\rotatebox{90}{Health}} 
  & PSI-LinUCB & \textbf{0.022} & \textbf{0.017} & \textbf{0.016} & \textbf{0.0016} \\
  & CBSCFD & 0.020 & 0.015 & 0.014 & 0.0012 \\
  & CBRAP & 0.020 & \textbf{0.017} & 0.016 & 0.0012 \\
  & DBSLinUCB & 0.001 & 0.001 & 0.001 & 0.0006 \\
\hline
\multirow{4}{*}{\rotatebox{90}{Beauty}} 
  & PSI-LinUCB & \textbf{0.013} & \textbf{0.011} & \textbf{0.010} & \textbf{0.0076} \\
  & CBSCFD & 0.012 & 0.009 & 0.009 & 0.0011 \\
  & CBRAP & 0.012 & \textbf{0.011} & \textbf{0.010} & 0.0011 \\
  & DBSLinUCB & \textbf{0.013} & \textbf{0.011}  & \textbf{0.010} & 0.0047 \\ 
\hline
\end{tabular}
\end{table}

\begin{table}[h]
\centering
\small
\caption{Computational efficiency on different datasets}
\label{tab:efficiency_metrics}
\setlength{\tabcolsep}{2pt}
\renewcommand{\arraystretch}{1.05}
\begin{tabular}{ll|cc|cc}
\hline
{Dataset} & {Algorithm} &{Train (s)} & {Eval (s)} & {Train (GB)} & {Eval (GB)} \\
\hline
\multirow{4}{*}{\rotatebox{90}{Magazine}} 
  & PSI-LinUCB & \textbf{10.3} & 0.145 & \textbf{1.4} & 1.6 \\
  & CBSCFD & 22.9 & 0.163 & 1.5 & 1.5 \\
  &CBRAP & 134.9 & 9.224 & 1.5 & 1.5 \\
  & DBSLinUCB & 24.6 & \textbf{0.102} & 1.5 & 1.8 \\
  \hline
\multirow{4}{*}{\rotatebox{90}{Health}} 
  & PSI-LinUCB & \textbf{25.9} & \textbf{1.24} & \textbf{6.9} & \textbf{8.8} \\
  & CBSCFD & 61.8 & 2.28 & 8.4 & 8.9 \\
  & CBRAP & 563.5 & 96.37 & 10.9 & 16.6 \\
  & DBSLinUCB & 518.8 & 1.5 & 7.3 & 9.1 \\
\hline
\multirow{4}{*}{\rotatebox{90}{Beauty}} 
  & PSI-LinUCB & \textbf{83.6} & \textbf{3.5} & 21.7 & 26.8 \\
  & CBSCFD & 106.5 & 6.01 & 22.4 & 27.2 \\
  & CBRAP & 1566.0 & 350.0 & 41.7 & 67.0 \\
  & DBSLinUCB & 290.8 & 3.6 & \textbf{19.1} & \textbf{24.8} \\
\hline
\hline
\end{tabular}
\end{table}

\subsection{Batch size and rank trade-off}\label{sec:exp6}

The batch size $B$ controls the frequency of PSI updates. Larger batches reduce training time but may slightly degrade quality. Figures~\ref{fig:health_batch} and~\ref{fig:beauty_batch} in Appendix show that training time decreases substantially with larger batches while hit rate remains stable. This enables flexible tuning: larger batches for latency-sensitive deployments, smaller batches for quality-critical scenarios.

\begin{figure}[t!]
  \centering
  \begin{subfigure}[b]{0.49\columnwidth}
    \includegraphics[width=\textwidth]{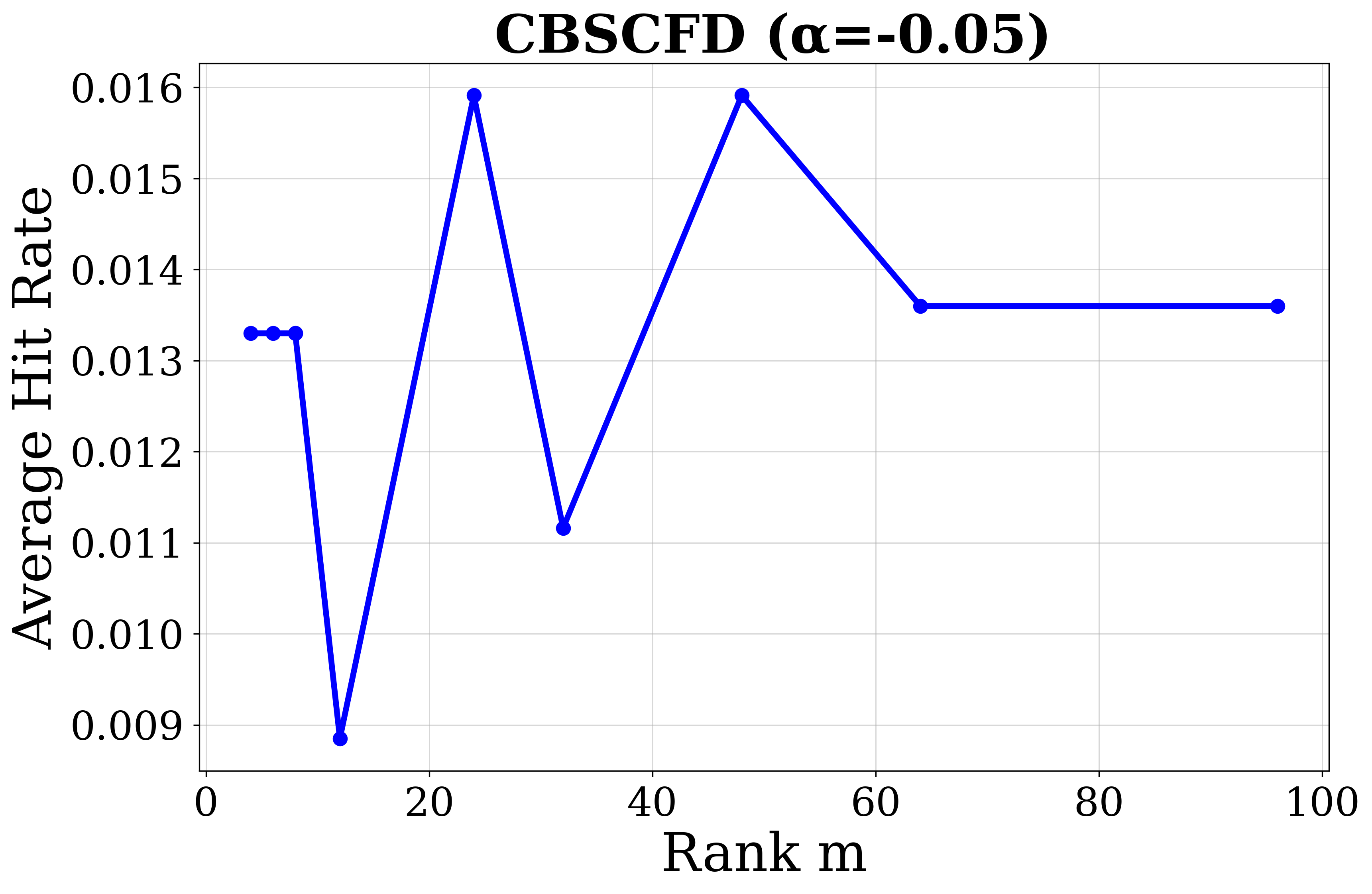}
    \caption{Amazon Health}
    \label{fig:cbscfd_health}
  \end{subfigure}
  \hfill
  \begin{subfigure}[b]{0.49\columnwidth}
    \includegraphics[width=\textwidth]{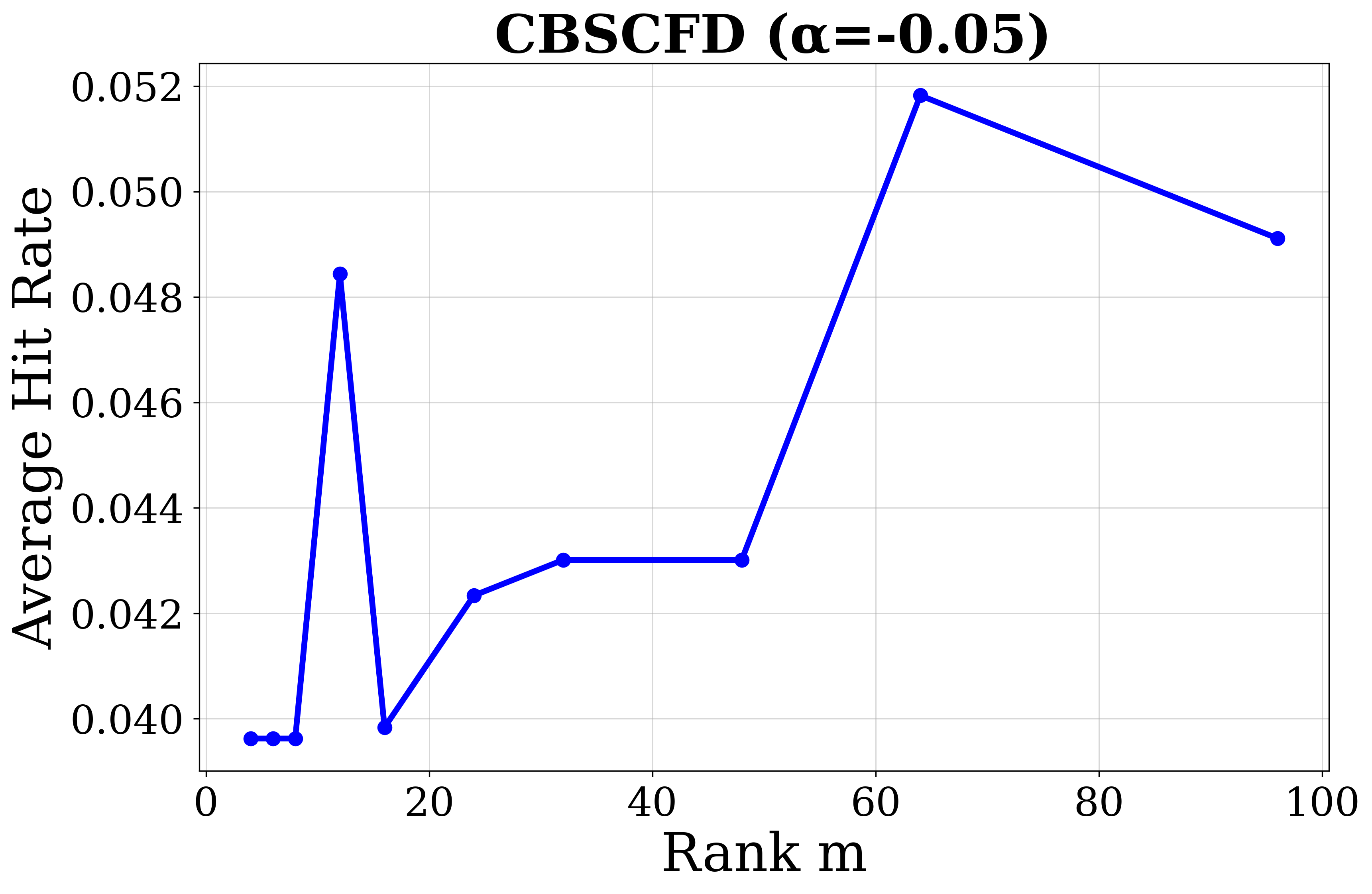}
    \caption{Amazon All Beauty}
    \label{fig:cbscfd_beauty}
  \end{subfigure}
  \caption{Average Hit Rate for different $m$ for CBSCFD.}
  \label{fig:cbscfd_ranks_amazon}
\end{figure}

\begin{figure}[t!]
  \centering
  \begin{subfigure}[b]{0.49\columnwidth}
    \includegraphics[width=\textwidth]{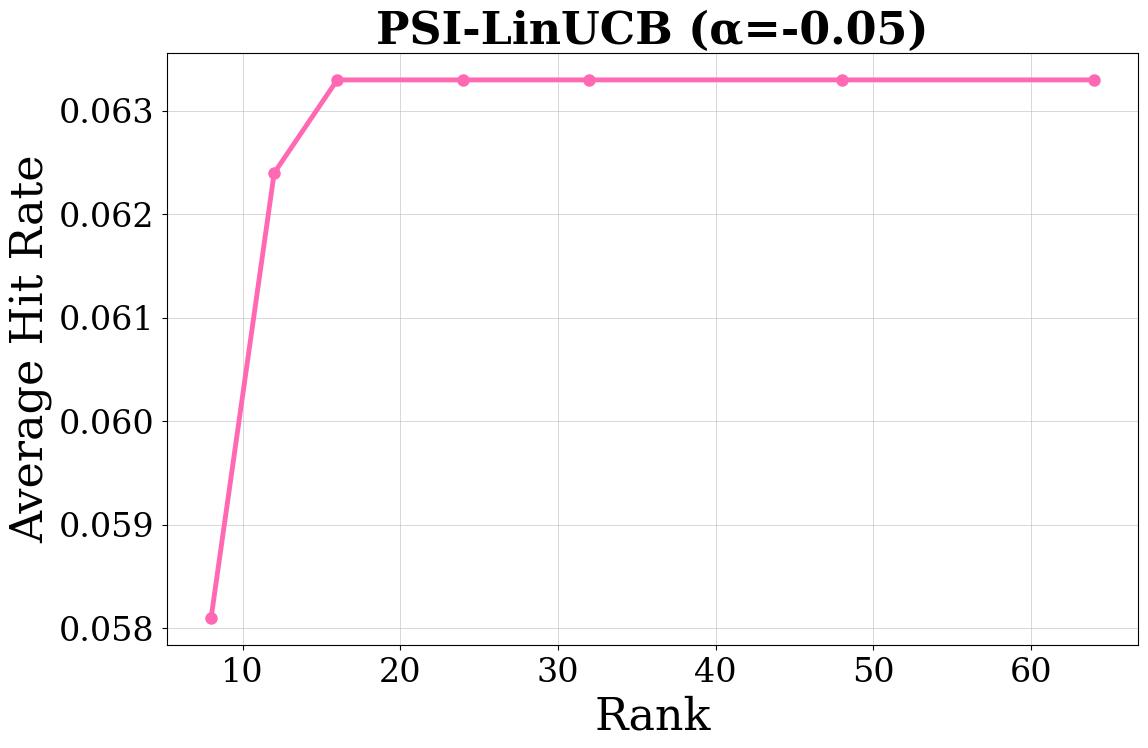}
    \caption{Amazon Health}
    \label{fig:psi_health}
  \end{subfigure}
  \hfill
  \begin{subfigure}[b]{0.49\columnwidth}
    \includegraphics[width=\textwidth]{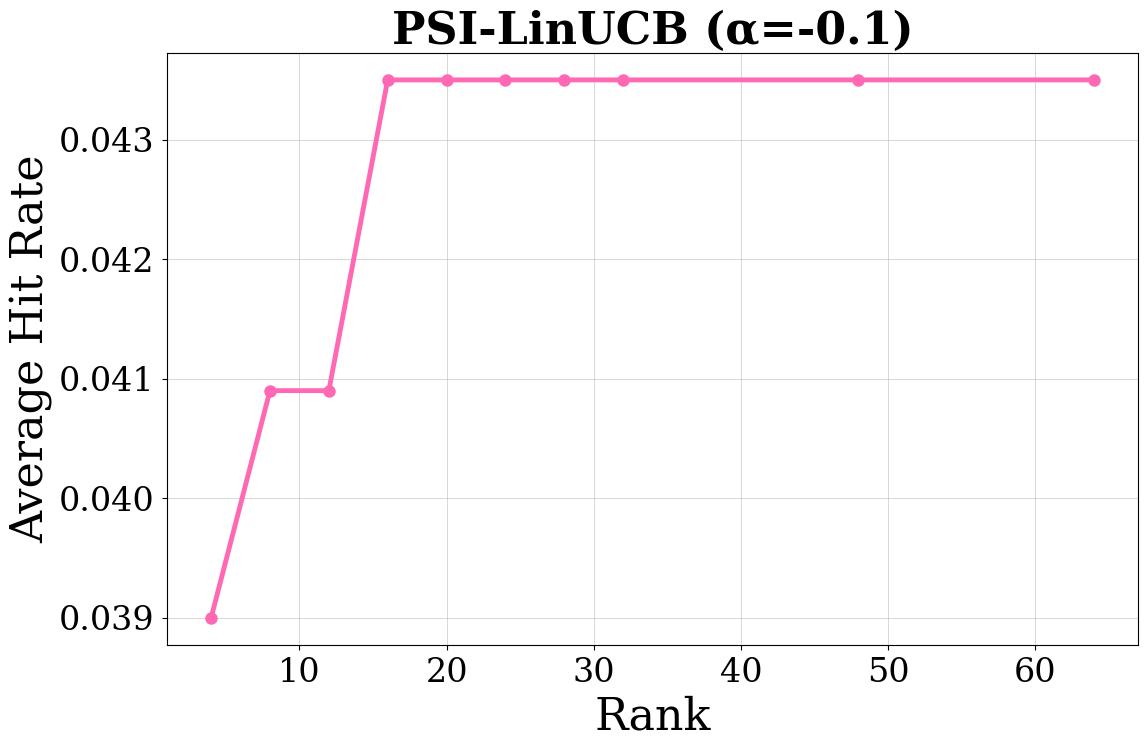}
    \caption{Amazon All Beauty}
    \label{fig:psi_beauty}
  \end{subfigure}
  \caption{Average Hit Rate for different $r$ for PSI-LinUCB.}
  \label{fig:psi_ranks_amazon}
\end{figure}

\section{Conclusion}
\label{sec:conclusion}
We presented \emph{PSI-LinUCB}, a scalable variant of LinUCB for large-scale contextual bandits. The method maintains a compact representation of the inverse regularized design matrix using a diagonal term and a low-rank correction, which allows efficient computation. Moreover, our method has an average complexity of updates of order $\mathcal{O}(dr)$ per candidate. A key direction for future work is to develop theoretical guarantees under direct approximations of the inverse regularized design matrix. To our knowledge, there are no regret guarantees for linear bandit algorithms in this setting. At the same time, as we show numerically, such algorithms might be preferable as compared to the classical sketching algorithms in particular applications.

\section*{Acknowledgements}
This research was supported in part through computational resources of HPC facilities at HSE University \cite{kostenetskiy2021hpc}.

\section*{Impact Statement}
This paper presents work whose goal is to advance the field of Machine Learning. There are many potential societal consequences of our work, none which we feel must be specifically highlighted here.

\newpage
\bibliography{refs-1}

\newpage
\begin{appendix}

\section{LinUCB with Batch Updates}
\label{sec:alg_linUCB_batch}
\begin{algorithm}[ht]
\caption{LinUCB}
\label{alg:LinUCB}
\begin{algorithmic}[1] 
\REQUIRE train\_data = $[batch_0, \dots, batch_{n-1}]$, regularization parameter $\lambda$
\STATE for each arm $a\in \mathcal{A}$ set $A_{0, a}= \lambda I$
    \FOR{t in \{0,\ldots,n-1\}}
    \STATE $X_{t,a} = []$, $R_{t,a} = []$
        \FORALL{$(u,a,r)$ in $batch_{t}$}
            \STATE $X_{t,a}.append\,[x_{t,a}]$  
            \STATE $R_{t,a}.append\,[r]$  
        \ENDFOR
        \FOR{each arm $a$ in $batch_t$}
            \STATE $A_{t+1,a} \leftarrow A_{t,a} + X_{t,a}^\top X_{t,a}$
            \STATE $b_{t+1,a} \leftarrow b_{t,a} + X_{t,a}^\top R_{t,a}$
            \STATE ${\theta}_{t+1,a} \leftarrow A_{t+1,a}^{-1} b_{t+1,a}$
        \ENDFOR
    \ENDFOR
\FOR{each arm $a \in \mathcal{A}$}
    \STATE Observe context $x_{t,a}$
    \STATE Compute $\text{UCB}_{t,a} =  {\theta}_{t,a}^\top {x}_{t,a} + \alpha \sqrt{{x}_{t,a}^\top {A}_{t,a}^{-1} {x}_{t,a}}$
\ENDFOR
\STATE \textbf{return} $\arg\max_{a \in \mathcal{A}} \text{UCB}_{t,a}$
\end{algorithmic}
\end{algorithm}

\section{Related Work and Baseline Selection}
\label{sec:related-work}
\subsection{Scalable variants of LinUCB}
Although LinUCB \cite{li2010contextual} is a widely used algorithm in recommendation systems, the time and memory requirements increases with the dimension of the context $d$ and the number of items, since it is necessary to store and invert the matrix $A_a\in \mathbb{R}^{d\times d}$ for each item $a$.
To address scalability constraints, several extensions and variants of the LinUCB algorithm have been proposed in the
literature.

A widely used approach to accelerate LinUCB is to apply rank-1 updates of the ridge-regularized design matrix $A_a$ via the Sherman–Morrison identity~\cite{Angioli2025Efficient, Ciucanu2022Implementing, Wang2022Dynamic, Yan2025CoCoB, Ozbay2024Comparative, zenati2022efficient}.
These updates reduce the per-round update complexity from $\mathcal{O}(d^3)$ to $\mathcal{O}(d^2)$ while maintaining exact parameter estimates, but still require storing a full $d \times d$ matrix for each arm.

Another line of research aims to improve the efficiency of LinUCB through matrix sketching techniques. Early work \cite{kuzborskij2019efficient} demonstrated that LinUCB can be efficiently implemented using the Frequent Directions (FD) sketching method, reducing the per-round update time from $\mathcal{O}(d^2)$ to $\mathcal{O}(md)$, where $m$ is the sketch size and $d$ is the feature dimension. However, applying FD to contextual bandits has certain drawbacks. In particular, the FD sketching method violate the positive definite monotonicity design matrices $A_{t,a}$, which may affect stability and theoretical guarantees. To address this limitation, a Spectral Compensation Frequent Directions (SCFD) method and its adaptation (CBSCFD) for high-dimensional contextual bandit were proposed \cite{chen2020efficient}, which preserves positive definiteness while maintaining the same $\mathcal{O}(md)$ computational and memory complexity.
More recently, adaptive sketching techniques such as Dyadic Block Sketching \cite{wen2024matrix} have been introduced to dynamically adjust the sketch size, ensuring a per-round update complexity of $\mathcal{O}(d l)$, where $l$ is the current sketch size. 
This adaptive strategy prevents excessive spectral loss and avoids linear regret when the spectrum of the design matrix decays slowly.  

Another perspective on improving the efficiency of linear contextual bandits is through dimensionality reduction via random projection. For example, the Contextual Bandits via Random Projection (CBRAP) algorithm \cite{yu2017cbrap}, address the challenges of high-dimensional contexts by mapping the original $d$-dimensional features to a lower $m$-dimensional subspace. 
This reduces the per-round update complexity from $\mathcal{O}(d^2)$ to $\mathcal{O}(md + m^3)$. 

One more approach to tackle the limitations of the LinUCB algorithm is to approximate each design matrix $A_a$ along its diagonal as proposed in Diag-LinUCB \cite{yi2023online}. This allows scalable online updates running as $\mathcal{O}(d)$ per round in terms of both time and memory. As the authors demonstrate, in the specific contexts $x_{ua}$ such updates are sufficient to prevent the loss of model quality during training. 

\subsection{Baselines used in our experiments}
To evaluate PSI-LinUCB, we select baselines that cover both exact implementations and implementations using covariance matrix approximation, which reduce either the effective rank of the matrix or the feature dimension.

As exact implementations, we report results for Batch LinUCB \citep{li2010contextual}, which matches batched training protocol provided in \Cref{alg:LinUCB} and LinUCB Classic with Sherman--Morrison rank-one inverse updates \citep{Angioli2025Efficient}. These baselines provide a useful quality reference and illustrate the memory and computational cost of exact updates in high dimensions.

To represent design-matrix compression via sketching, we use CBSCFD \citep{chen2020efficient}, which combines FD with an additional correction and  maintains a low-rank sketch of the design matrix. CBSCFD is widely used as a robust sketching baseline for high-dimensional linear contextual bandits. We additionally include Dyadic Block Sketching (DBSLinUCB) \citep{wen2024matrix} as a recent adaptive sketching approach that dynamically adjusts sketch size. 

To cover feature-space compression, we include CBRAP \citep{yu2017cbrap}, which applies random projections before performing LinUCB-style updates. This baseline is conceptually different from sketching the design matrix and provides a typical alternative for dimensionality reduction.

Some methods are closely related but are not included as primary baselines. We do not include FD \citep{kuzborskij2019efficient} in the main comparison because CBSCFD \citep{chen2020efficient}   combines FD with an additional correction and provides a stronger and more robust representative within the same sketching family. We also omit Diag-LinUCB \citep{yi2023online} because it suggests to learn context representations (using non-linear transformations) followed by diagonal approximation and thus the comparison with original LinUCB is uninformative.

\section{Experiments}
\label{app:experiments}
\subsection{Hyperparameter Selection}
\label{appendix:tuning}

\textit{Context dimensionality $d$}

The optimal values of $r'$ were selected using the scree plot, which displays singular values in descending order. The resulting context sizes are:
\begin{itemize}
   \item {MovieLens 1M}: $r' = 13$, $d = 169$;
  \item {Magazine Subscriptions}: $r' = 41$, $d = 1681$;
  \item {Health \& Personal Care}: $r' = 37$, $d = 1369$;
  \item {All Beauty}: $r' = 51$, $d = 2601$.
\end{itemize}

\textit{Regularization $\alpha$}

We used grid search to find the optimal exploration parameter $\alpha$. First, a broad grid $\alpha \in \{-10,-8,\dots,10\}$ was evaluated, followed by a refined search over $\alpha \in \{-1, -0.5, -0.3, -0.2, -0.1, -0.05, -0.03, -0.01, 0.1\}$. The optimal $\alpha$ for LinUCB coincided with that for PSI-LinUCB.

\textit{Rank and sketch size}

PSI-LinUCB rank, CBSCFD/CBRAP parameter $m$, DBSLinUCB block size $l_0$ (which is not fixed but grows
exponentially during learning) and the error parameter $\varepsilon $, was explored over the grid $\{b \cdot 2^i \mid i \in \{1,\dots,8\}, b \in \{2,3\}\}$.

\textit{Feature extraction}

Given an interaction matrix $A \in \mathbb{R}^{m \times n}$, we apply truncated SVD: $A \approx U \Sigma V^{\top}$. Item features are obtained from $V^\top \in \mathbb{R}^{n \times r}$, and user features from $A \cdot V^{\top} \in \mathbb{R}^{m \times r}$. Features are extracted from warm-up data, cold users lacking valid context representations are excluded from evaluation. Note that cold-start users is not inherent to bandits, as soon as there are available contexts, the algorithm can be run, the restriction arises from our feature construction SVD-based pipeline and does not alter the fundamental bandit framework.
Then we construct a context ${x}_{t, a} = \text{vec}(x_u {x}_a^{\top}) \in \mathbb{R}^{d_u \times d_a}$, where $x_u$ and $x_a$ are user and item feature vectors. In our experiments, we set $d_u = d_a = r'$, yielding $d = (r')^2$.


\textit {Data processing}

Training time measures initial warm-up training; evaluation time is the average prediction duration per user, averaged across test months. The interaction dataset is grouped by arms, with each batch corresponding to a fixed number of interactions for a single arm. This scheme is applied uniformly across all algorithms for fair comparison. The batch-to-rank ratio (see~Remark $\ref{rem}$) in PSI-LinUCB corresponds to the compression frequency factor of 2 in CBSCFD; however, batch size does not affect final model state or quality for CBSCFD and CBRAP, as these methods perform incremental per-sample updates.

\begin{figure}[t!]
  \centering
  \begin{subfigure}[b]{0.49\columnwidth}
    \includegraphics[width=\textwidth]{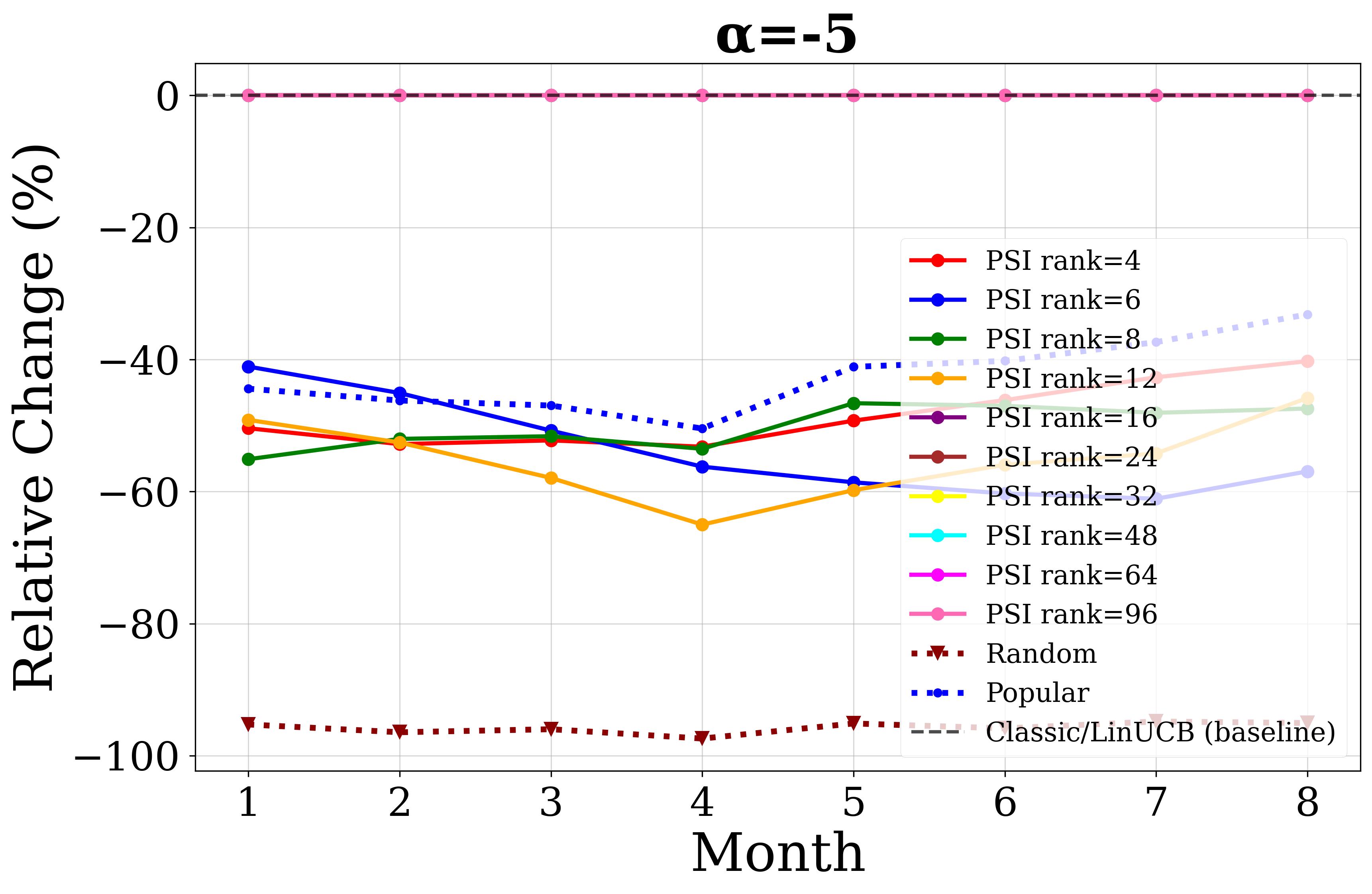}
    \caption{MovieLens 1M}
    \label{fig:quality_movielens}
  \end{subfigure}
  \hfill
  \begin{subfigure}[b]{0.49\columnwidth}
    \includegraphics[width=\textwidth]{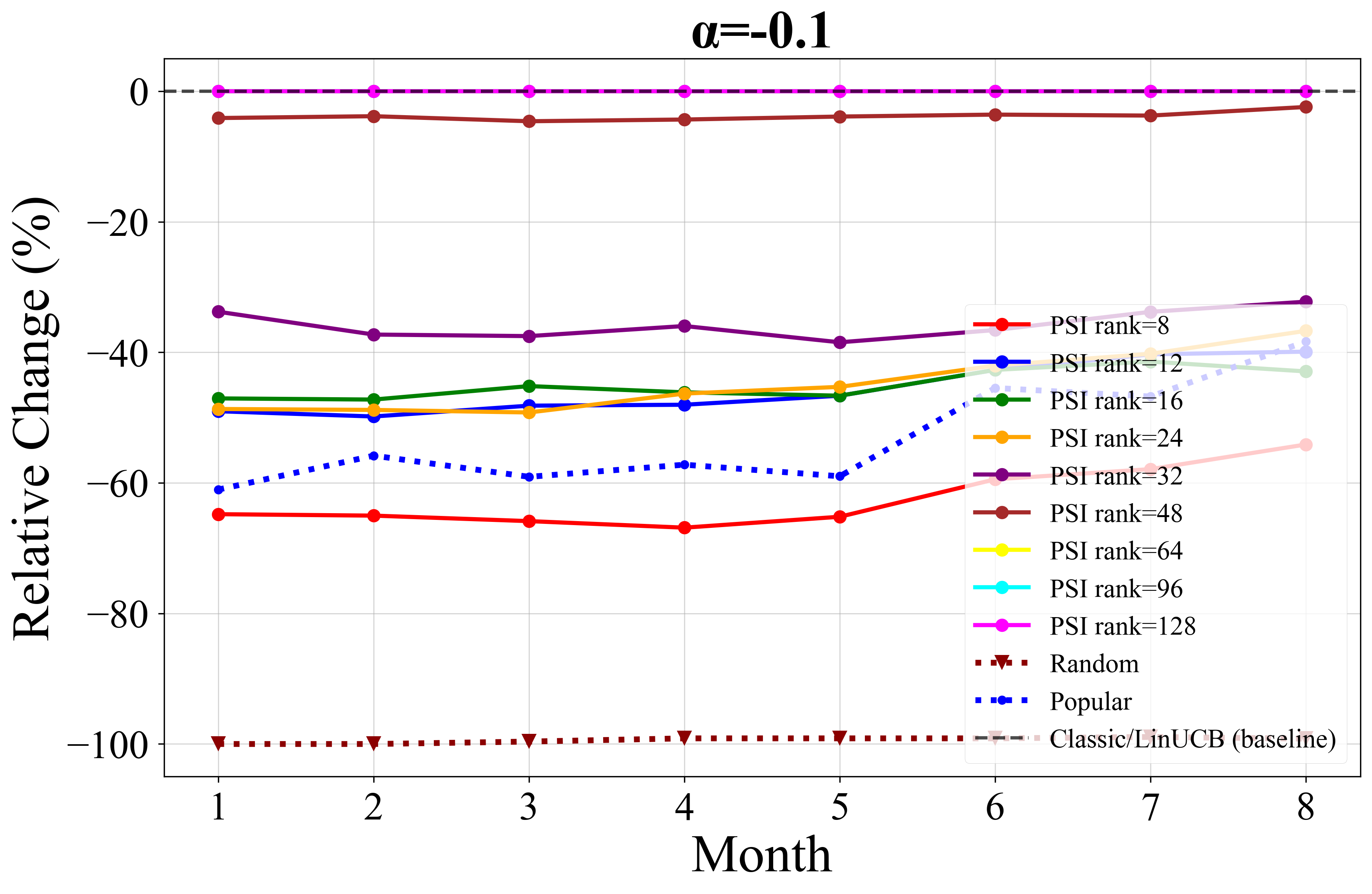}
    \caption{Magazine Subscriptions}
    \label{fig:quality_music}
  \end{subfigure}
  \caption{Average Hit Rate for different $r$ for PSI-LinUCB.}
  \label{fig:quality_comparison_other}
\end{figure}

\begin{figure}[t!]
    \centering
    \includegraphics[width=1\columnwidth]{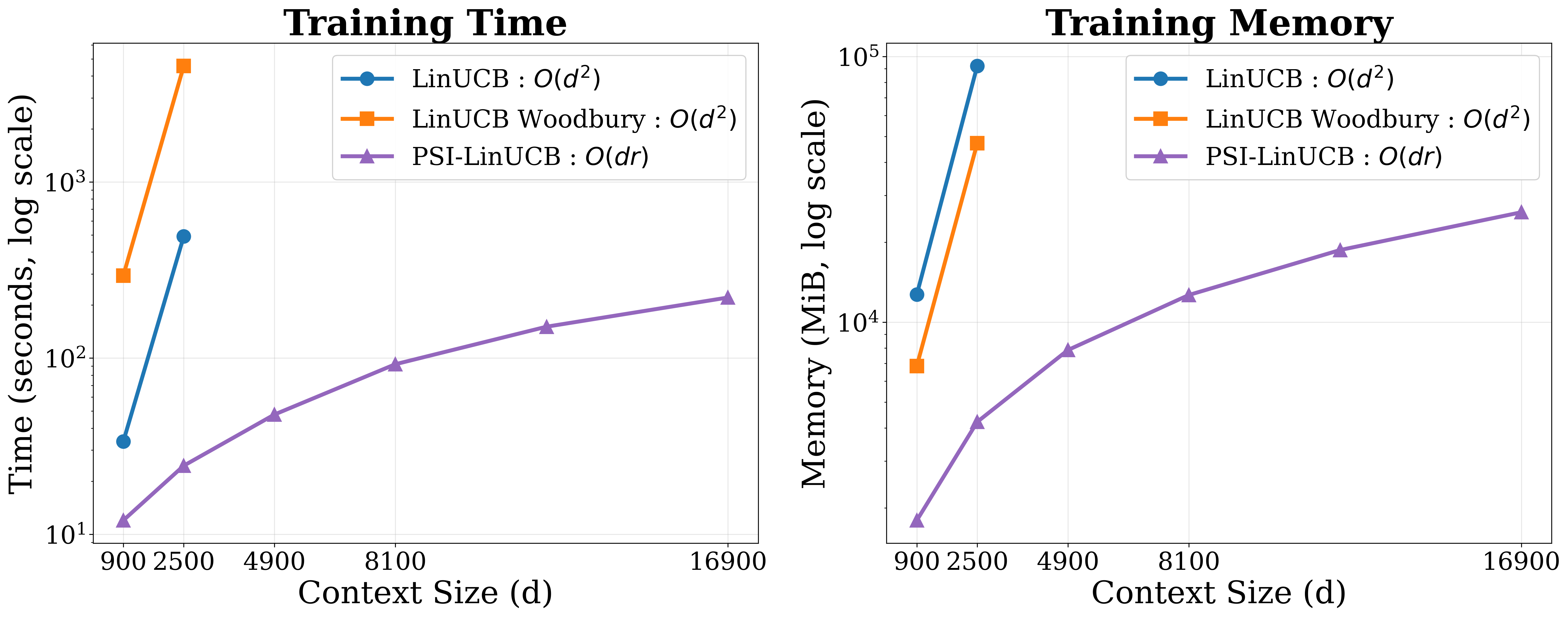}
    \caption{Performance comparison across different context sizes on Amazon Health dataset.}
    \label{fig:context_mag_train}
\end{figure}

\begin{figure}[t!]
  \centering
  \begin{subfigure}[b]{0.49\textwidth}
    \includegraphics[width=\textwidth]{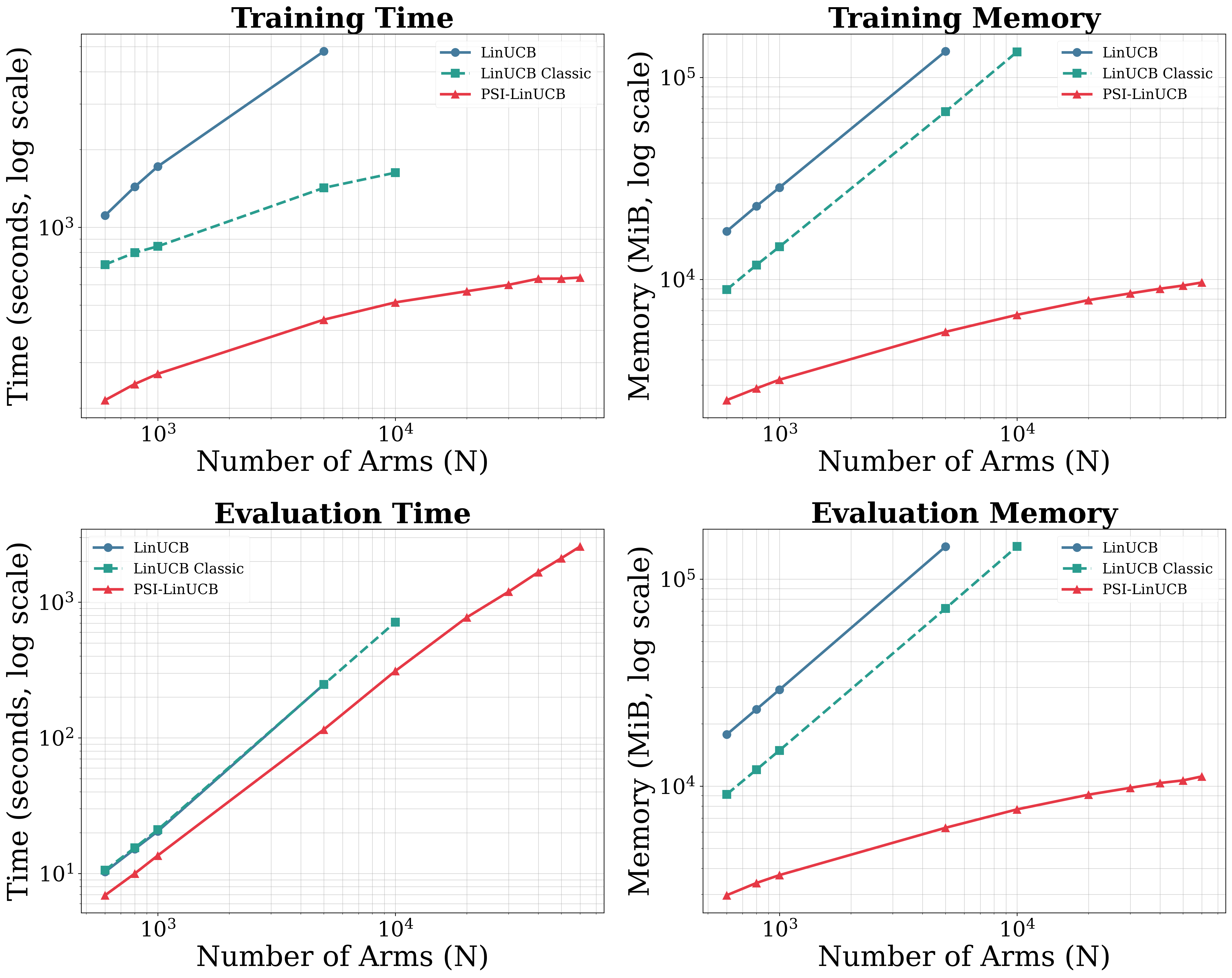}
    \caption{Health \& Personal Care}
    \label{fig:arms_health}
  \end{subfigure}%
  \hfill
   \begin{subfigure}[b]{0.49\textwidth}
    \includegraphics[width=\textwidth]{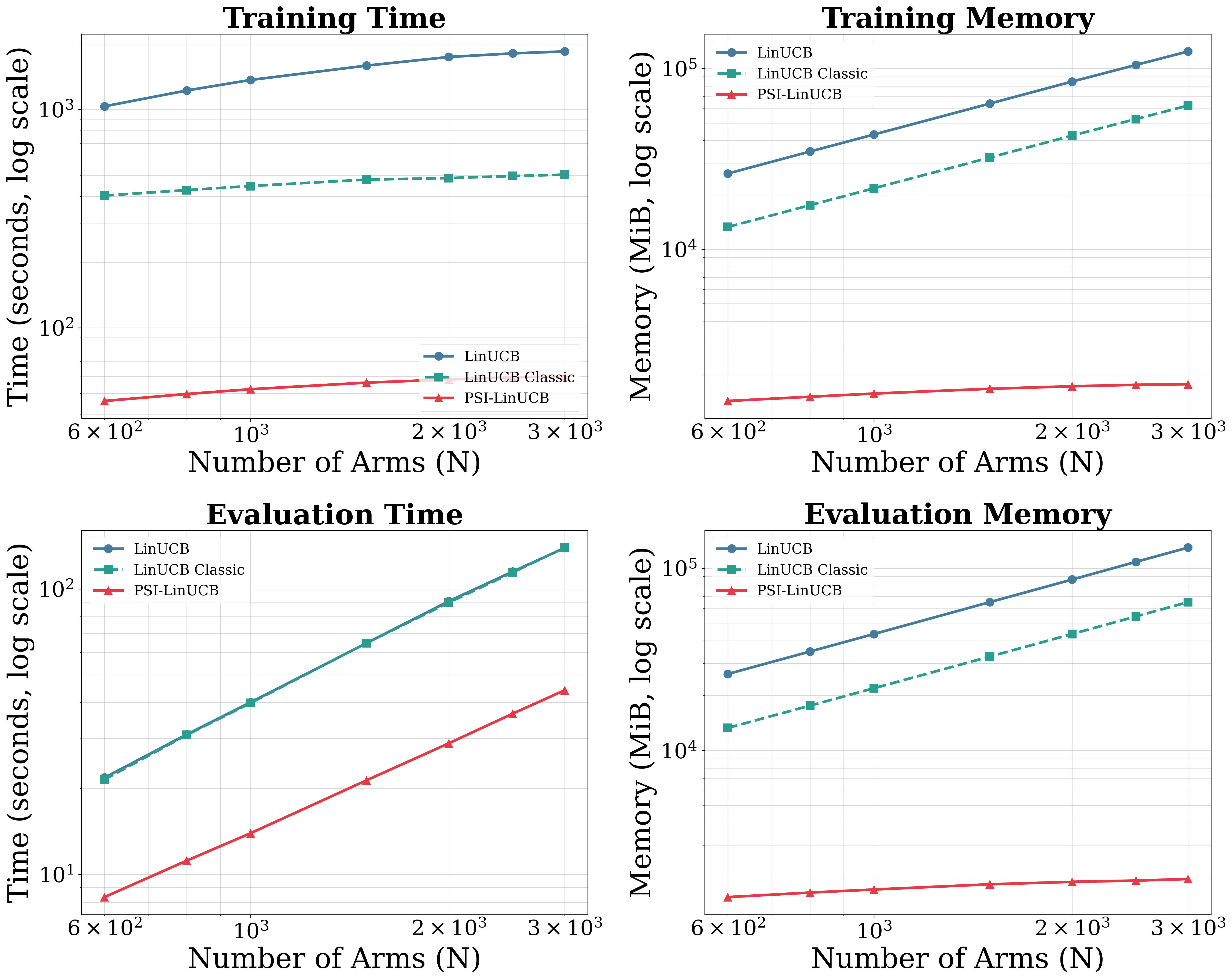}
    \caption{Magazine Subscriptions}
    \label{fig:arms_mag}
  \end{subfigure}
  
  \vspace{2mm}

  \begin{subfigure}[b]{0.48\textwidth}
    \includegraphics[width=\textwidth]{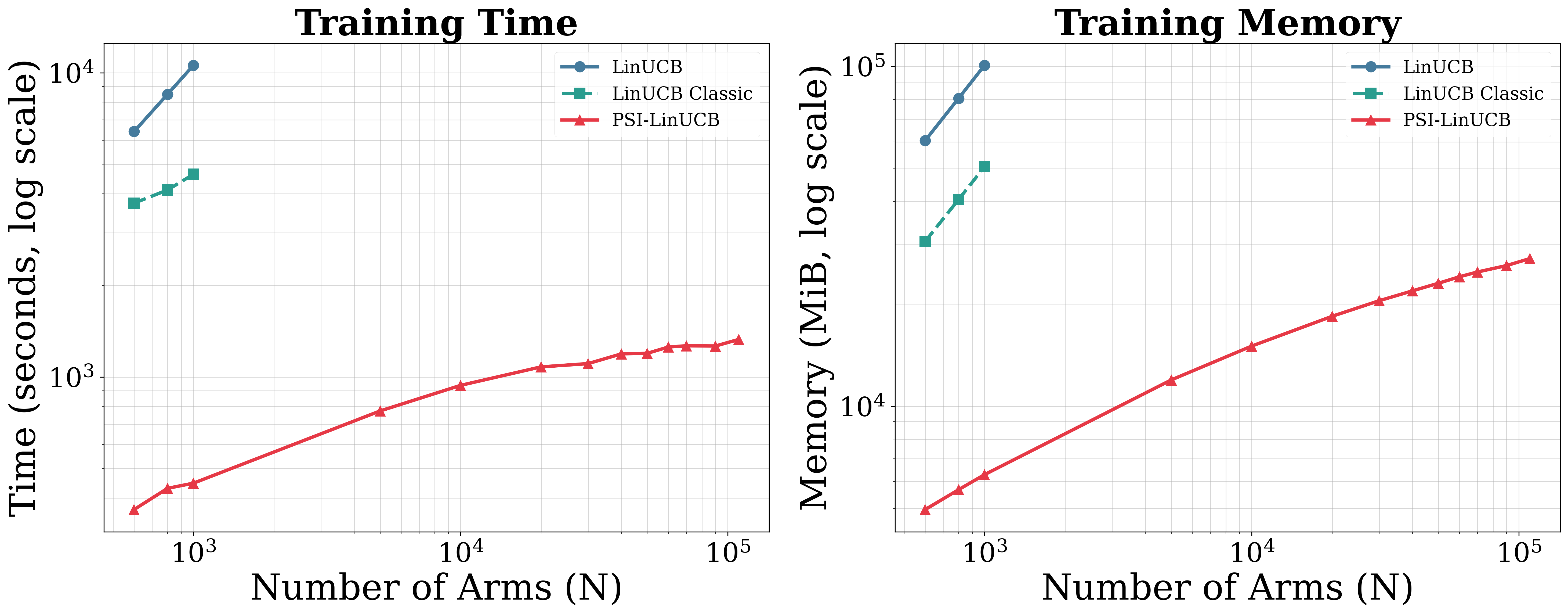}
    \caption{Beauty}
    \label{fig:arms_beauty_train}
  \end{subfigure}
  \caption{Algorithm scaling with number of arms on different datasets.}
  \label{fig:arms_scaling}
\end{figure}

\begin{figure}[t]
  \centering
  \includegraphics[width=\columnwidth]{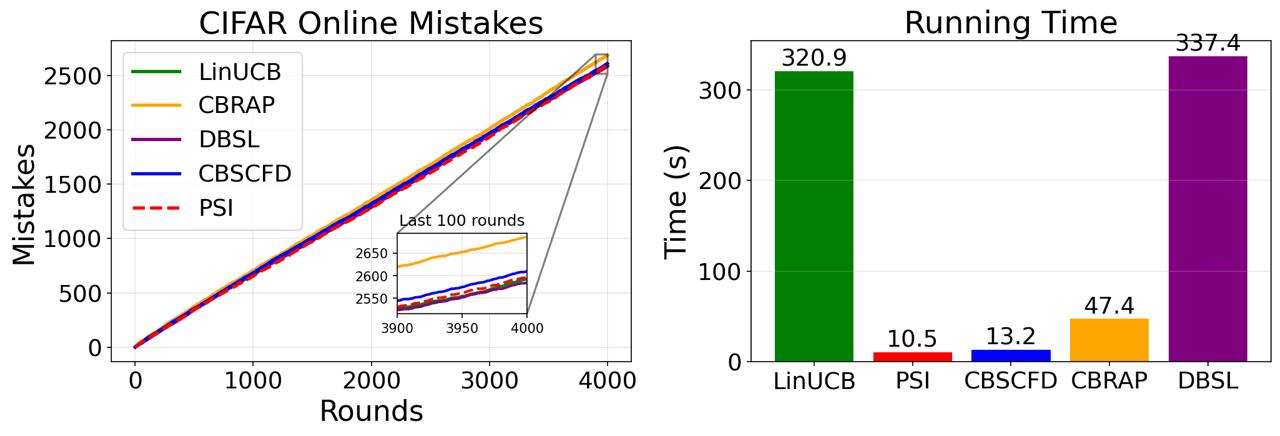}
  \caption{Online classification results on CIFAR-10.}
  \label{fig:cifar_shared}
\end{figure}

\begin{figure}[t]
  \centering
  \includegraphics[width=\columnwidth]{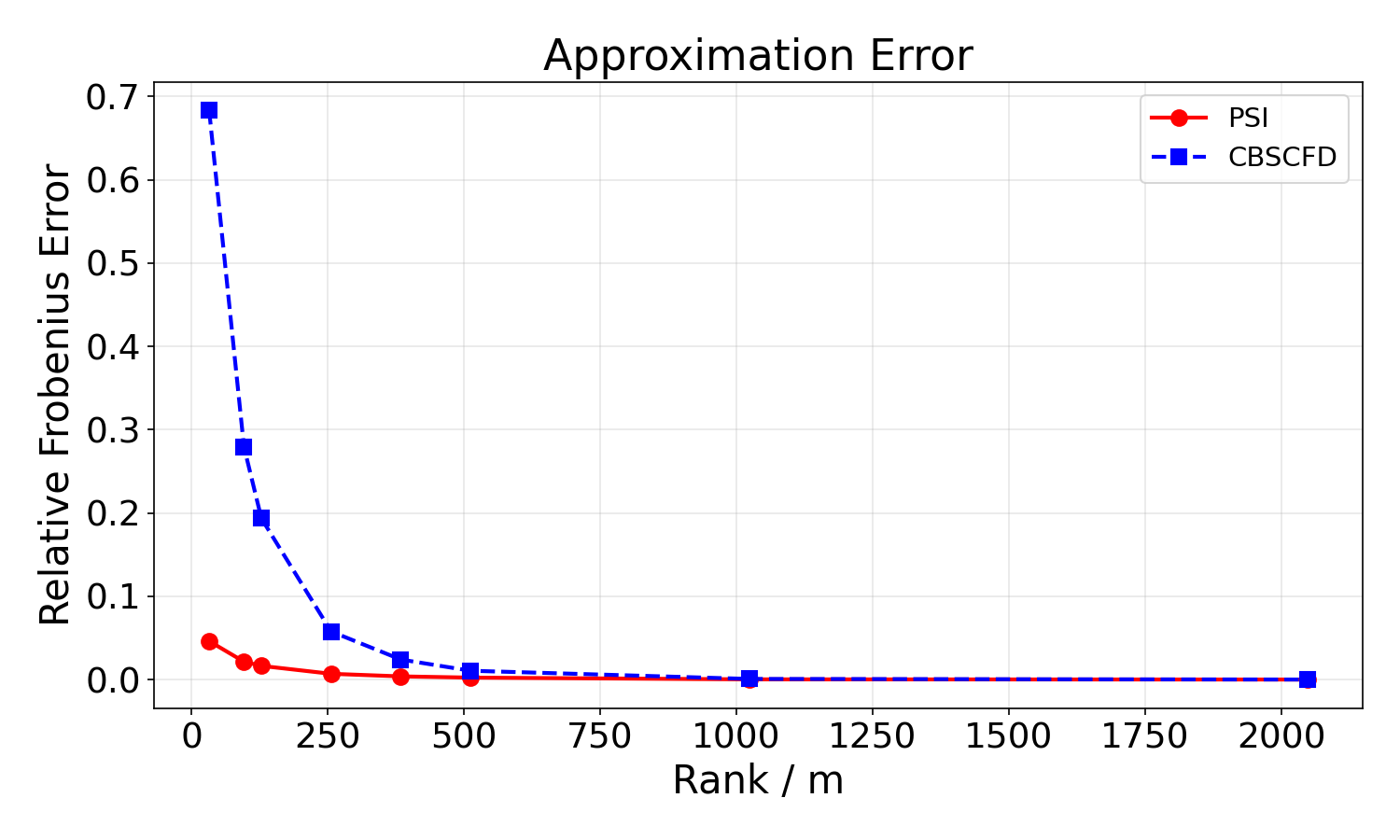}
  \caption{Approximation error of $A^{-1}$ on CIFAR-10.}
  \label{fig:approx_inverse_cifar}
\end{figure}

\begin{table}[t!]
\centering
\small
\caption{Summary of datasets for online classification}
\label{tab:cifar_mnist}
\begin{tabular}{lcccc}
\hline
\textbf{Dataset} & \textbf{\#Samples} & \textbf{\#Features} & \textbf{\#Classes} &  \\
\hline
MNIST & 60000 & 784 & 10 \\
CIFAR-10 & 50000 & 3072 & 10 \\
\hline
\end{tabular}
\end{table}

\begin{figure}[t!]
  \centering
  \begin{subfigure}[b]{0.49\textwidth}
    \includegraphics[width=\textwidth]{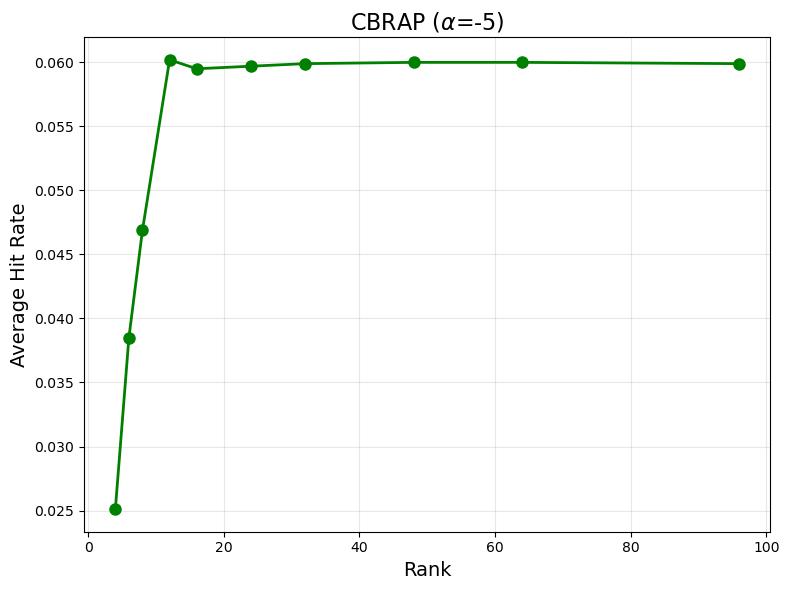}
    \caption{MovieLens 1M}
    \label{fig:cbrap_movielens}
  \end{subfigure}
  \hfill
  \begin{subfigure}[b]{0.49\textwidth}
    \includegraphics[width=\textwidth]{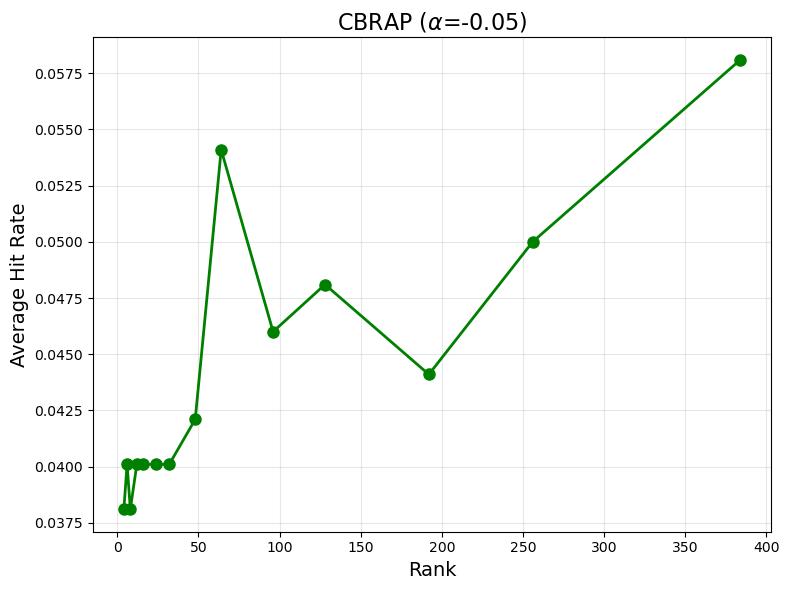}
    \caption{Amazon All Beauty}
    \label{fig:cbrap_beauty}
  \end{subfigure}
  
  \vspace{2mm}
  
  \begin{subfigure}[b]{0.49\textwidth}
    \includegraphics[width=\textwidth]{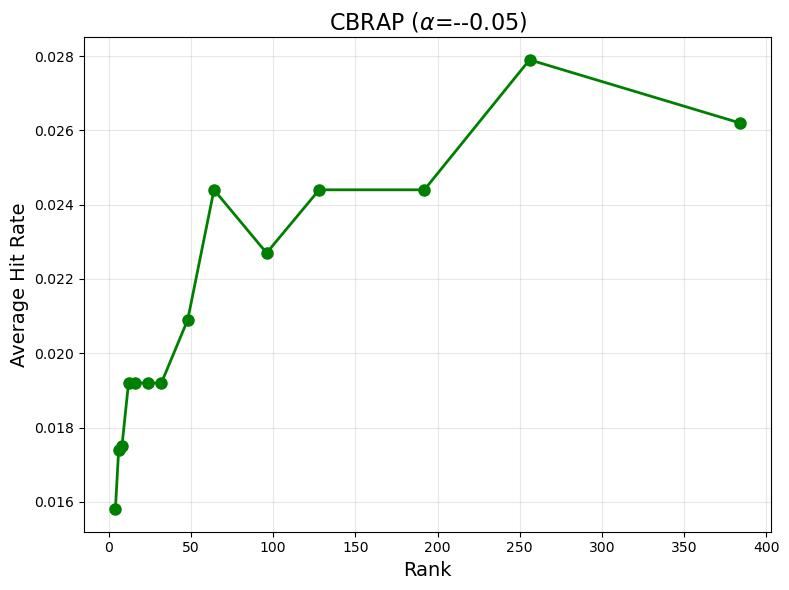}
    \caption{Amazon Health}
    \label{fig:cbrap_health}
  \end{subfigure}
  \hfill
  \begin{subfigure}[b]{0.49\textwidth}
    \includegraphics[width=\textwidth]{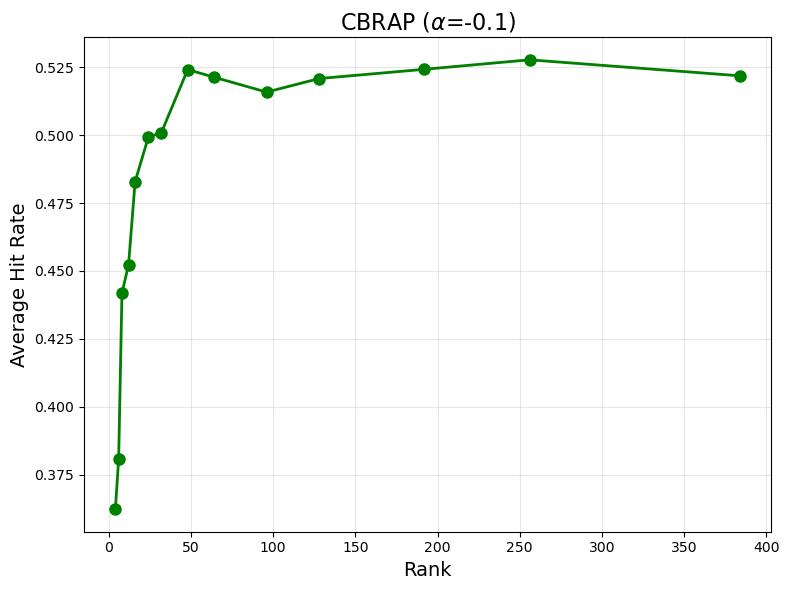}
    \caption{Magazine Subscriptions}
    \label{fig:cbrap_music}
  \end{subfigure}
  \caption{Average Hit Rate for different $m$ values for CBRAP.}
  \label{fig:cbrap_ranks}
\end{figure}

\begin{figure}[t!]
  \centering
  \begin{subfigure}[b]{0.49\textwidth}
    \includegraphics[width=\textwidth]{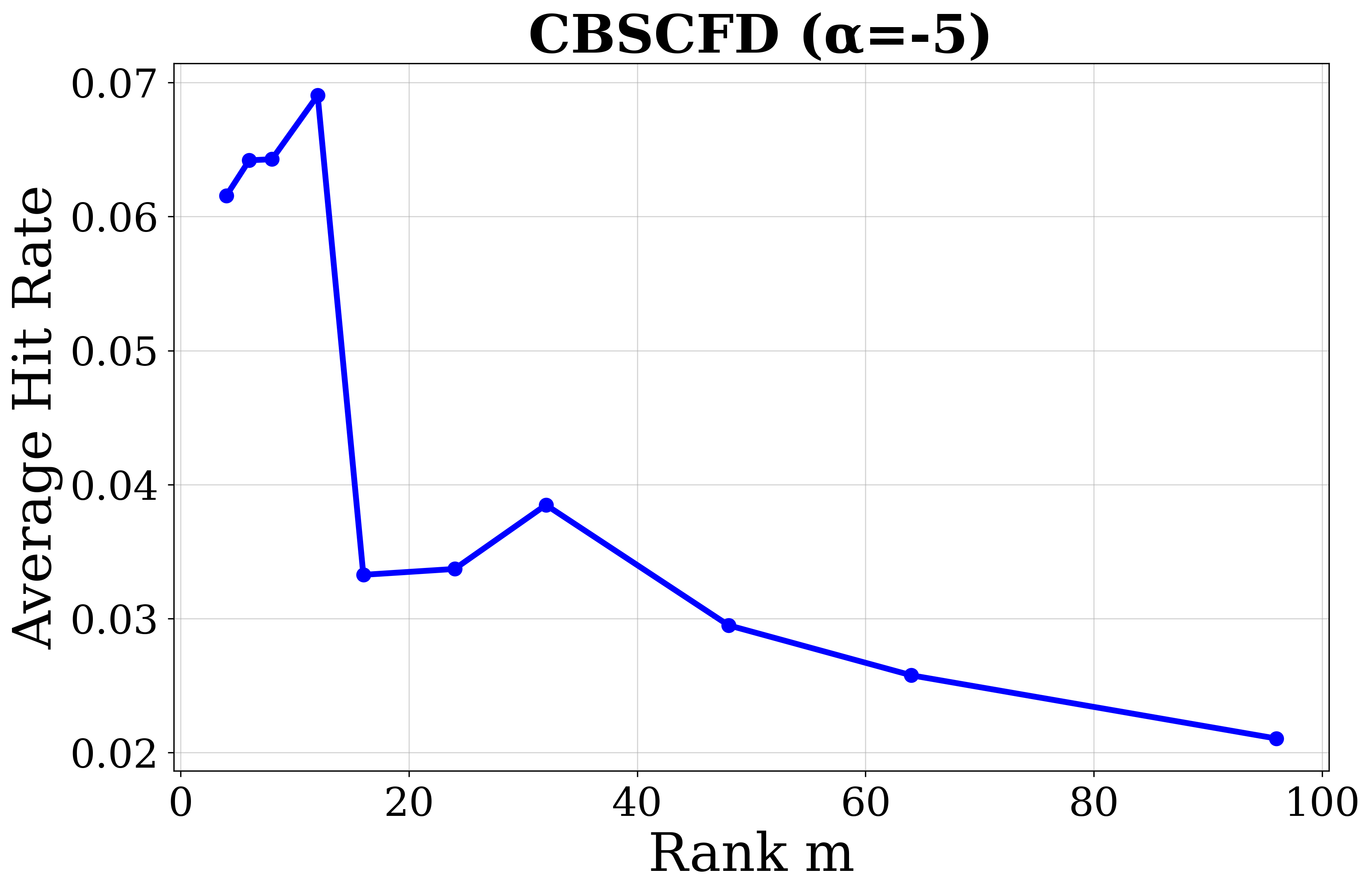}
    \caption{MovieLens 1M}
    \label{fig:cbscfd_movielens}
  \end{subfigure}
  \hfill
  \begin{subfigure}[b]{0.49\textwidth}
    \includegraphics[width=\textwidth]{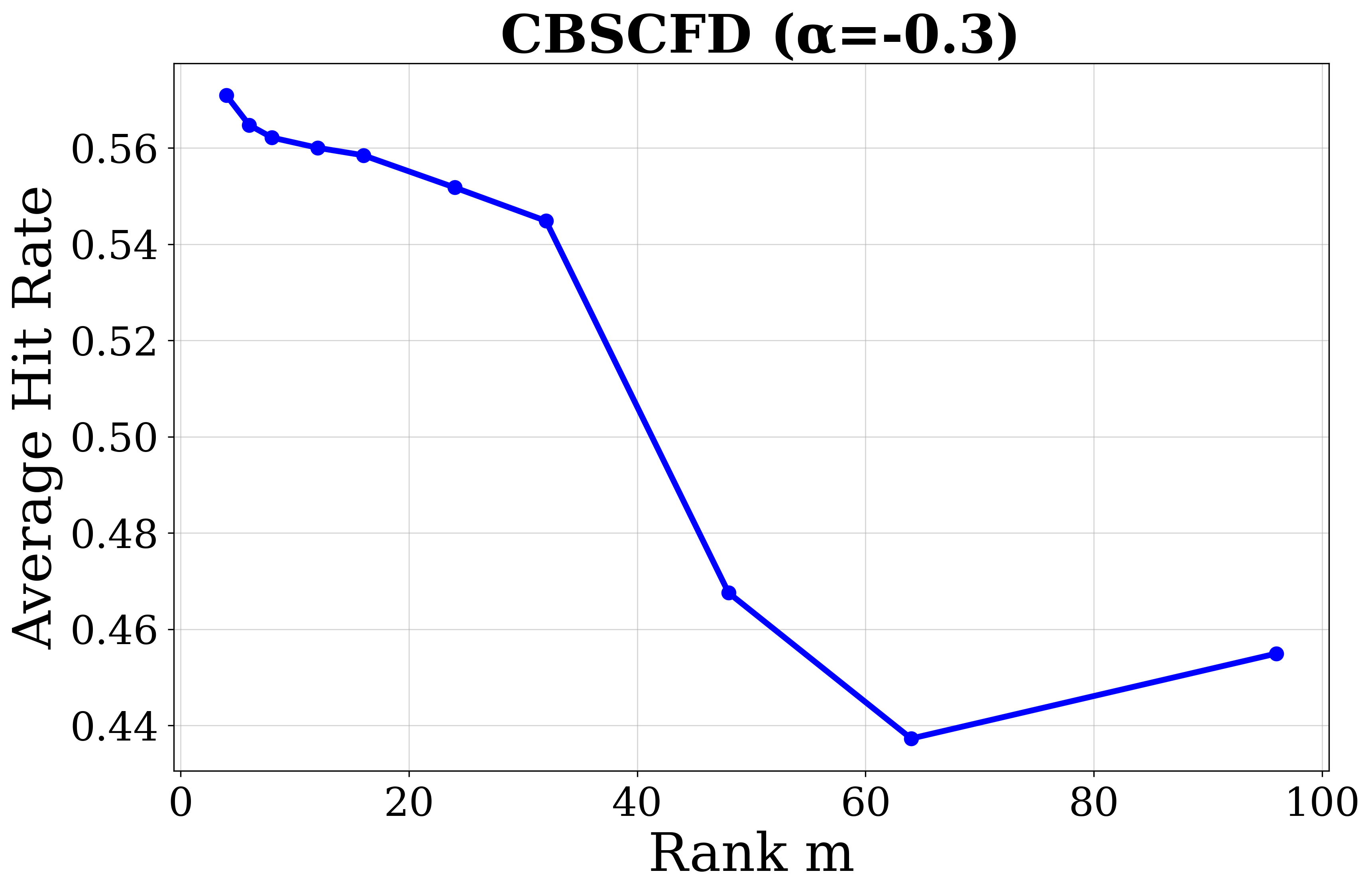}
    \caption{Magazine Subscriptions}
    \label{fig:cbscfd_music}
  \end{subfigure}
  \caption{Average Hit Rate for different $m$ for CBSCFD.}
  \label{fig:cbscfd_ranks_other}
\end{figure}

\begin{figure}[t!]
  \centering
  \begin{subfigure}[b]{0.49\textwidth}
    \includegraphics[width=\textwidth]{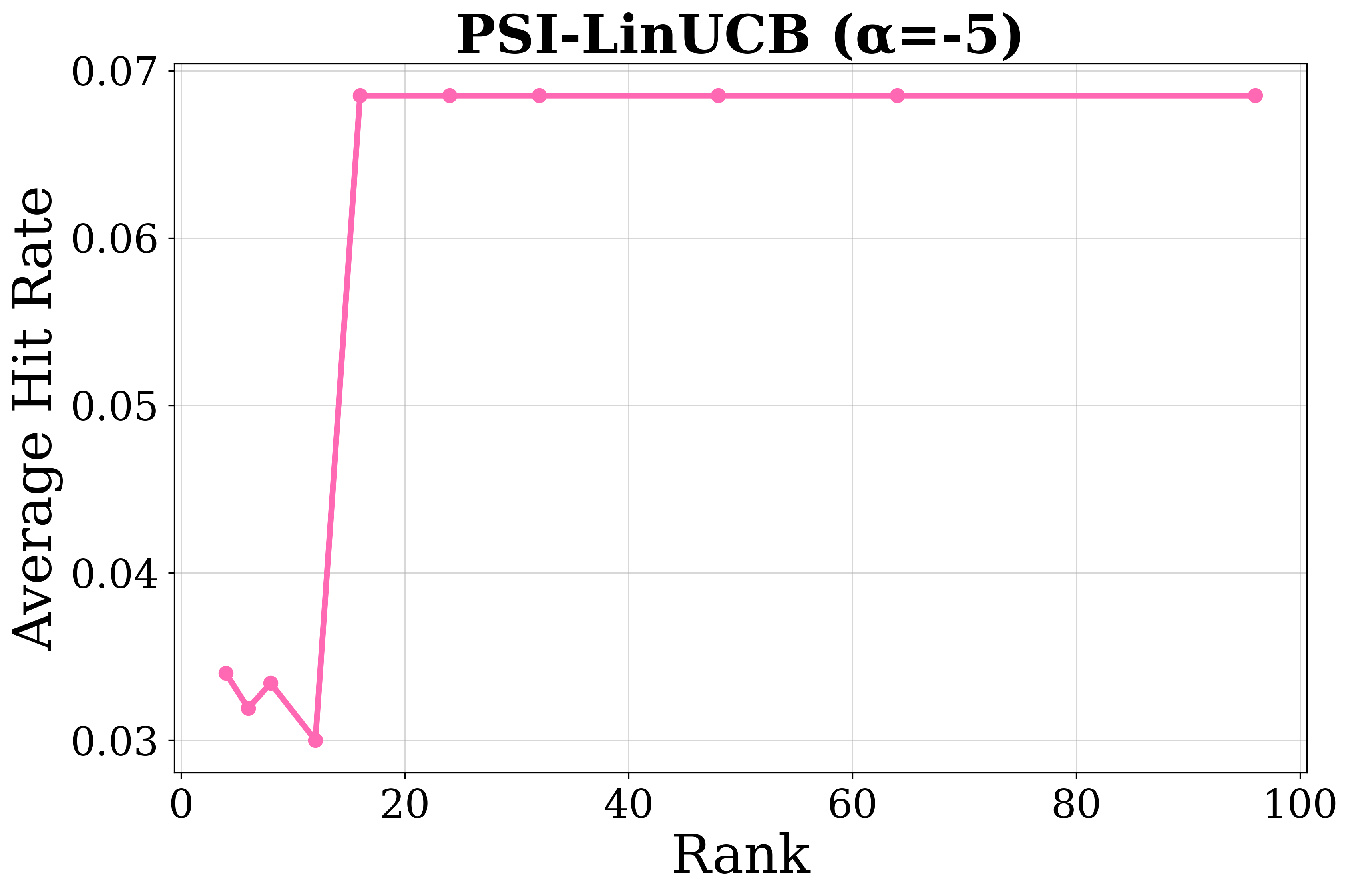}
    \caption{MovieLens 1M}
    \label{fig:psi_movielens}
  \end{subfigure}
  \hfill
  \begin{subfigure}[b]{0.49\textwidth}
    \includegraphics[width=\textwidth]{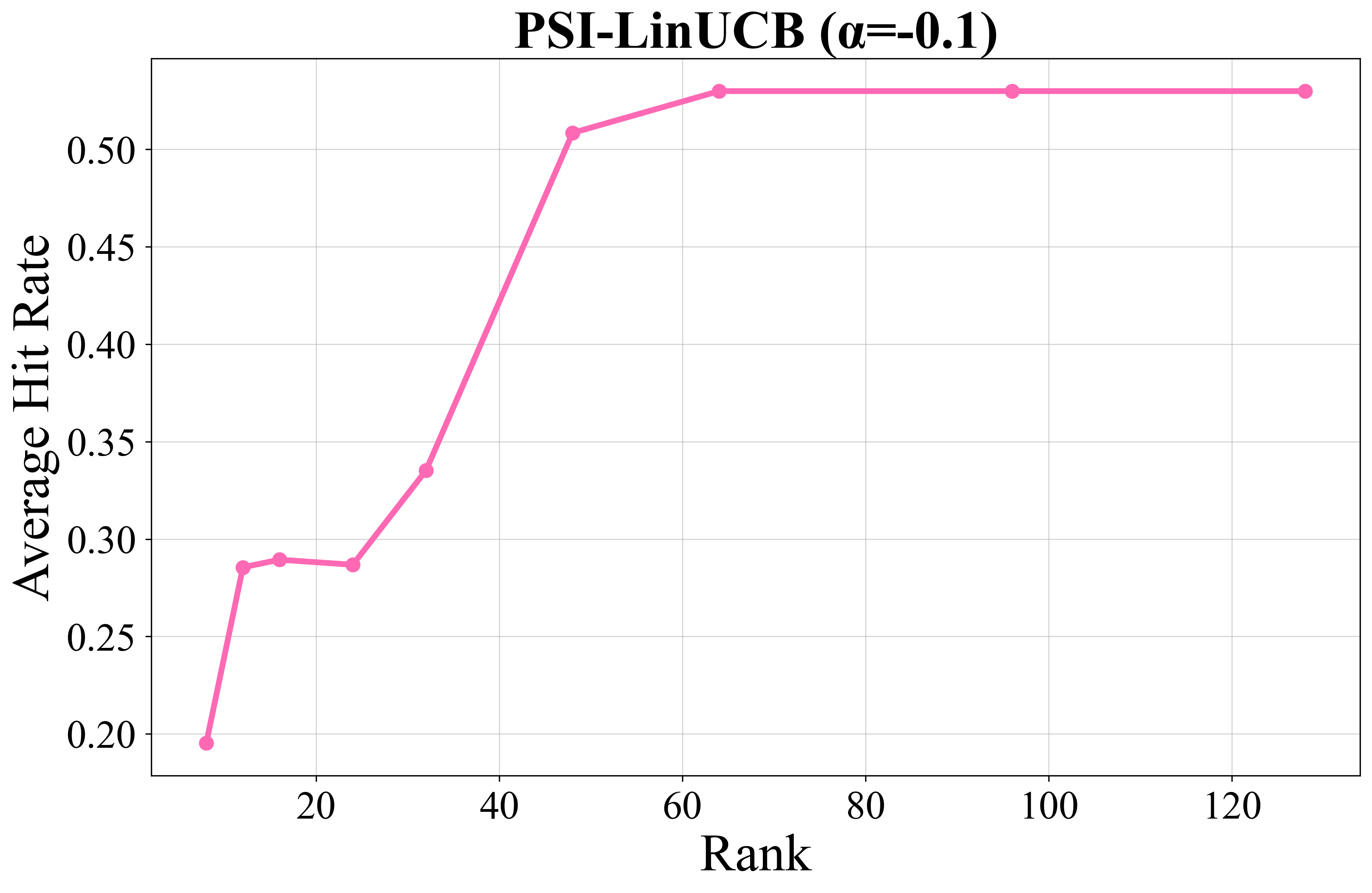}
    \caption{Magazine Subscriptions}
    \label{fig:psi_music}
  \end{subfigure}
  \caption{Average Hit Rate for different rank values for PSI-LinUCB.}
  \label{fig:psi_ranks_other}
\end{figure}

\begin{figure}[t]
    \centering
    \begin{subfigure}[b]{0.48\textwidth}
        \centering
        \includegraphics[width=\textwidth]{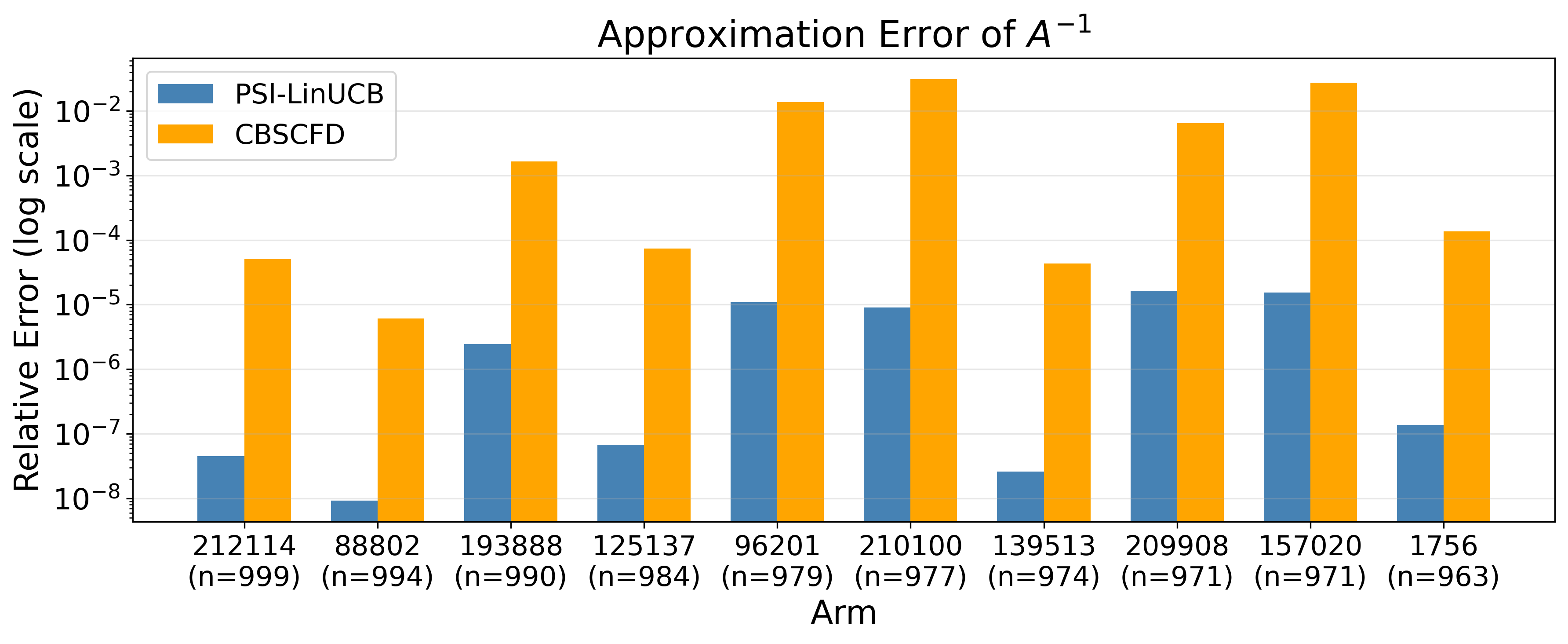}
        \caption{Amazon Health}
        \label{fig:approx_inverse_instr}
    \end{subfigure}
    \hfill
    \begin{subfigure}[b]{0.48\textwidth}
        \centering
        \includegraphics[width=\textwidth]{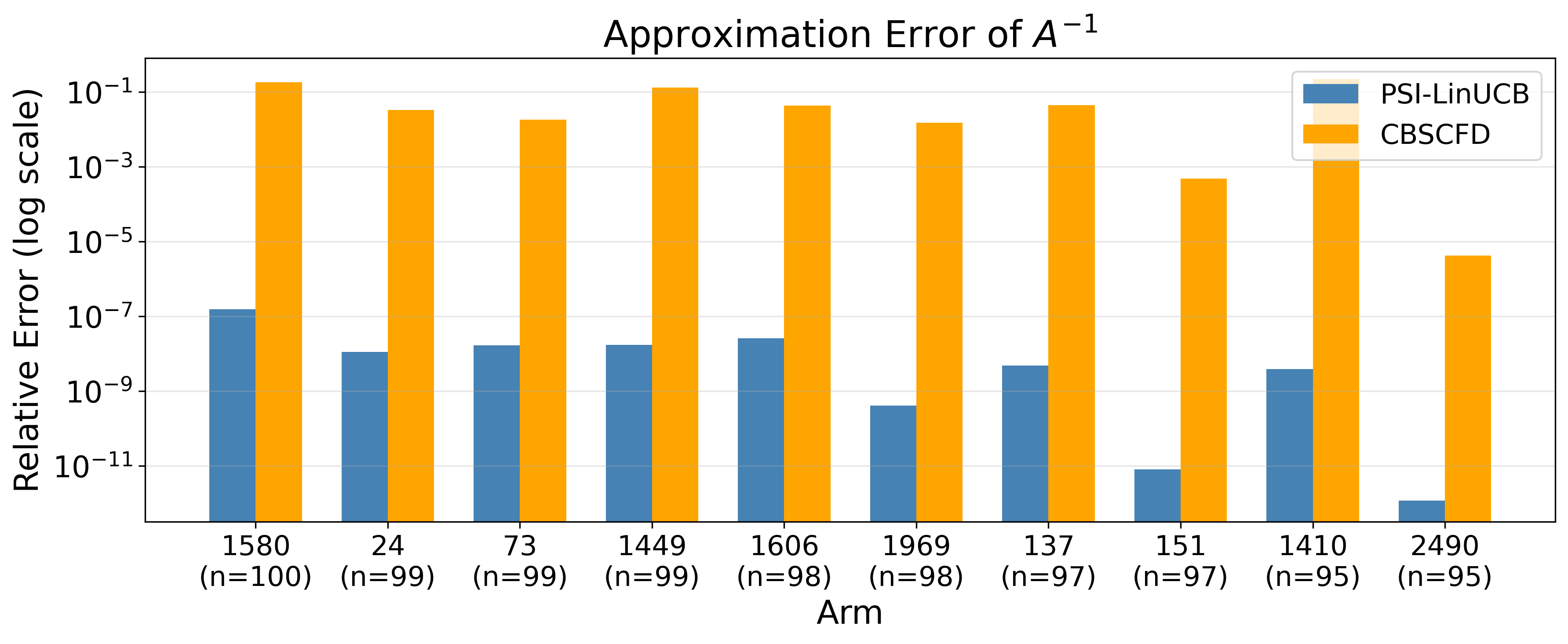}
        \caption{Magazine Subscriptions}
        \label{fig:approx_inverse_mag}
    \end{subfigure}
    \caption{Relative approximation error of $A^{-1}$ on real-world datasets.}
    \label{fig:approx_inverse}
\end{figure}

\subsection{Setting of online classification}\label{appendix:exp5}

The hyperparameters $\beta$ and $\lambda$ are selected via grid search over $\{10^{-4}, 10^{-3}, \ldots, 1\}$ and $\{2 \times 10^{-4}, 2 \times 10^{-3}, \ldots, 2 \times 10^{4}\}$, respectively, following the experimental protocol of~\cite{chen2020efficient}. 
For our PSI-LinUCB algorithm, the rank parameter $r$ and for sketching-based methods (CBSCFD, CBRAP) the optimal sketch size $m$ are selected via cross-validation over a grid ranging from $10$ to $100$ with step $10$ and $\{200, 300, 400, 500\}$ by minimizing the number of the average online mistakes on a validation run.

All experiments are repeated $20$ times and we report the average cumulative mistakes.

This dual applicability represents a practical advantage of our approach: the same algorithmic framework can be deployed both in the per-arm setting commonly used in real-world recommendation systems and in the shared model setting typical of experimental benchmarks. This allows for detailed theoretical analysis in controlled settings while maintaining direct applicability to production environments.

In addition to the baselines from~\cite{chen2020efficient}, we include a comparison with the Dyadic Block Sketching Linear (DBSL) algorithm~\cite{wen2024matrix}, which represents a more recent adaptive sketching approach.

\begin{algorithm}
\caption{\textit{PSI-LinUCB Rank-1 Update}}
\label{alg:linucb-psi-rank1}
\begin{algorithmic}[1]
\REQUIRE context $x_t$, reward $r_t$, rank $r$
\STATE $b_{t+1} = b_t + r_t \cdot x_t$
\STATE $L_{t}^{-1}=(I - U_{t}\,V_{t}^\top)L_0^{-1}$
\linecomment{We do not form $L_{t}^{-1}$ explicitly}
\STATE $\bar{x}_{t+1} = L_t^{-1} x_t$
\COMMENT{We do not form $L_t^{-1}$ explicitly}
\STATE $\alpha_{t+1} = \frac{\sqrt{1 + \|\bar{x}_{t+1}\|^2} - 1}{\|\bar{x}_{t+1}\|^2}$
\STATE $\beta_{t+1} = \frac{\alpha_{t+1}}{1 + \alpha_{t+1} \|\bar{x}_{t+1}\|^2}$
\IF{$U_t.\text{shape}[1] < 2r$}
    \STATE $U_{t+1} \gets [U_t, \; \beta_{t+1} \bar{x}_{t+1}]$
    \STATE $V_{t+1} \gets [V_t, \; (I - V_t U_t^\top) \bar{x}_{t+1}]$
\ENDIF
\IF{$U_t.\text{shape}[1] = 2r$}
    \IF{first time $U_t.\text{shape}[1] = 2r$}
        \STATE $\tilde{U}_{t+1} S_{t+1} \tilde{V}_{t+1}^\top = SVD(U_{t} V_{t}^\top)$
    \ENDIF
    \STATE $\Delta D_{t+1} = \beta_{t+1} \tilde{x}_{t+1} \tilde{x}_{t+1}^\top (I - U_t V_t^\top)$
    \COMMENT{We do not form $\Delta D_{t+1}$ explicitly}
    \STATE $\tilde{U}_{t+1}, S_{t+1}, \tilde{V}_{t+1} = \textit{PSI}(\tilde{U}_t, S_t, \tilde{V}_t, \Delta D)$
    \STATE $U_{t+1} \;\gets\; \tilde U_{t+1}S_{t+1}$
    \STATE $V_{t+1} \;\gets\; \tilde V_{t+1}$
\ENDIF
\STATE $\theta_{t+1} = L_{t+1}^{-\top} L_{t+1}^{-1} b_{t+1}$
\STATE \textbf{return} $U_{t+1}, V_{t+1}, \theta_{t+1}$
\end{algorithmic}
\end{algorithm}




\begin{figure}[t!]
  \centering
  \begin{subfigure}[b]{0.49\textwidth}
    \includegraphics[width=\textwidth]{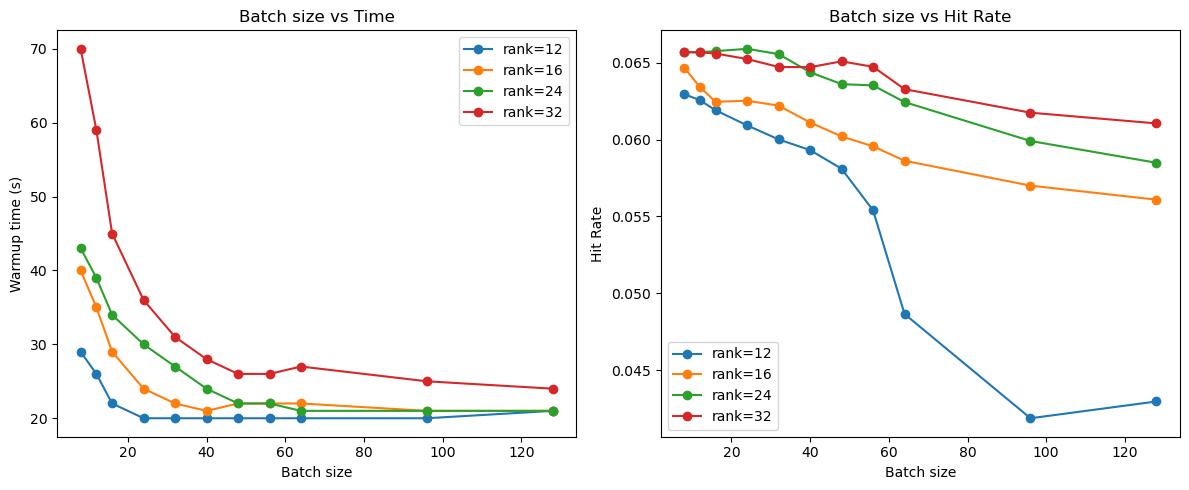}
    \caption{Amazon Health}
    \label{fig:health_batch}
  \end{subfigure}
  \hfill
  \begin{subfigure}[b]{0.49\textwidth}
    \includegraphics[width=\textwidth]{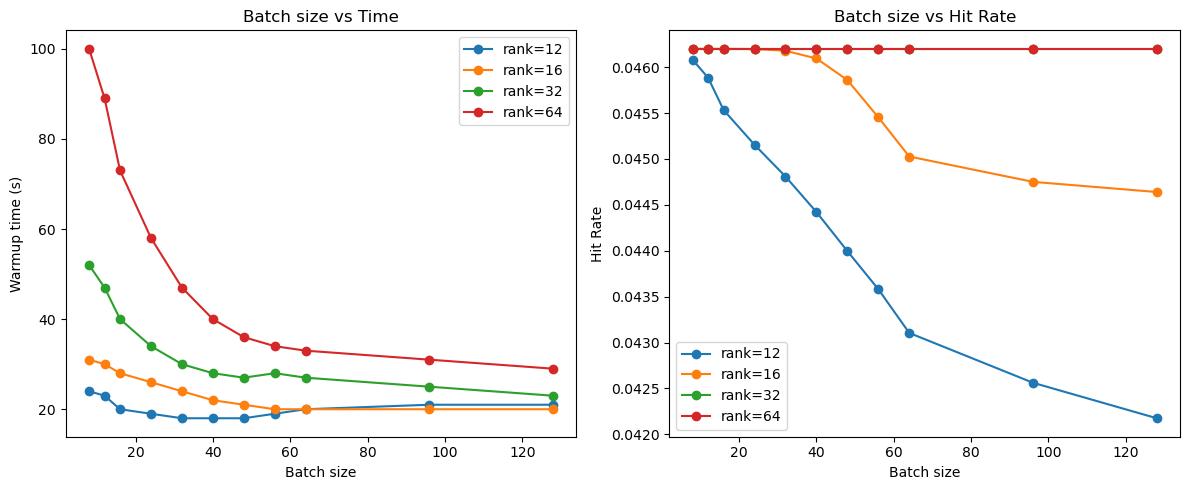}
    \caption{Amazon All Beauty}
    \label{fig:beauty_batch}
  \end{subfigure}
  \caption{Batch/rank dependence on Amazon datasets.}
  \label{fig:batch_dependence_amazon}
\end{figure}

\section{Proofs}
\label{app:proofs}
\subsection{Proof of \Cref{prop: rank-one update}} \label{sec:proof_rank-one update}

Using \eqref{eq:def_l_t_plus_1_rec}, we obtain 
\[
L_{t+1}^{-1} = (I+ \alpha_{t+1} \bar {x}_{t+1}\bar{x}_{t+1}^{\top})^{-1}L_t^{-1}\eqsp.
\]
After applying the Sherman-Morrison-Woodbury formula
\[
(I+ \alpha_{t+1} \bar {x}_{t+1}\bar{x}_{t+1}^{\top})^{-1} = I- \frac{\alpha_{t+1} \bar {x}_{t+1}\bar{x}_{t+1}^{\top}}{1+\alpha_{t+1}\bar{x}_{t+1}^{\top} \bar {x}_{t+1}} = I - \beta_{t+1}\bar{x}_{t+1}\bar{x}_{t+1}^{\top}\eqsp,
\]
the recursive structure of the inverse update reads:
\[
L_t^{-1} = (I- \beta_{t} \bar {x}_{t}\bar{x}_{t}^{\top})L_{t-1}^{-1}\eqsp.
\]
Unraveling this recursion from $t$ back to the initial condition gives
\begin{align}
L_t^{-1} = (I- \beta_{t} \bar {x}_{t}\bar{x}_{t}^{\top})L_{t-1}^{-1} = \prod_{i=1}^{t}(I- \beta_{i} \bar {x}_{i}\bar{x}_{i}^{\top})L_{0}^{-1}.
\end{align}
We now prove by induction that $L^{-1}_{t} = (I - U_{t}\,V_{t}^\top)L^{-1}_0$. Indeed, for $t=1$, we have
\[
L_{1}^{-1} = (I- \beta_{1} \bar {x}_{1}\bar{x}_{1}^{\top})L_{0}^{-1} = (I - U_1 V_1^{\top})L_{0}^{-1}\eqsp,
\]
where $U_1 = \beta_1 \bar x_1$ and $V_1 = \bar x_1$. Assuming that $L^{-1}_{t} = (I - U_{t}\,V_{t}^\top)L^{-1}_0$, 
\begin{align}
L_{t+1}^{-1} &= (I- \beta_{t+1} \bar {x}_{t+1}\bar{x}_{t+1}^{\top})L_{t}^{-1} \\
&= (I- \beta_{t+1} \bar {x}_{t+1}\bar{x}_{t+1}^{\top})(I - U_{t}\,V_{t}^\top)L^{-1}_0 \\
&= (I-U_tV_t^{\top} -\beta_{t+1} \bar x_{t+1} \bar x_{t+1}^{\top} + \beta_{t+1} \bar x_{t+1} \bar x_{t+1}^{\top}U_tV_t^{\top} )L_0^{-1} \\
&= (I - (U_tV_t^{\top} + \beta_{t+1}\bar x_{t+1} \bar x_{t+1}^{\top}(I - U_tV_t^{\top})))L_0^{-1} \label{eq:L_t_thorough_L_0}\eqsp.
\end{align}

This can be written as $L_{t+1}^{-1} = (I - U_{t+1}V_{t+1}^{\top})L_0^{-1}$, where
\begin{align}
\label{eq:rank1_UV}
U_{t+1} &= \begin{bmatrix} U_t & \beta_{t+1}\,\bar x_{t+1}\end{bmatrix}, \\
V_{t+1} &= \begin{bmatrix} V_t & (I - V_tU_t^{\top})\bar x_{t+1}\end{bmatrix}.
\end{align}
This completes the proof.

\subsection{Proof for \Cref{sec:batch-updates}}
\label{sec:proof_batch update}


    


\begin{theorem} \label{theorem:sym_fact}
\label{theorem: sym_fact}
Let $X_{t+1} \in \mathbb{R}^{d \times B}$ be a rank-$B$ update matrix with $B \ll d$. Then the updated matrix $ A_t+  X_{t+1} X_{t+1}^{\top}$ can be symmetrically factored as
\begin{equation}
A_{t+1} = L_{t+1}L_{t+1}^{\top},
\end{equation}
where
\begin{equation}
L_{t+1} = L_t(I + Q_{t+1}Y_{t+1}Q_{t+1}^{\top}),
\end{equation}
and the factors are obtained as follows:

\begin{enumerate}
    \item Compute $\bar{X}_{t+1} = L_t^{-1}X_{t+1} \in \mathbb{R}^{d \times B}$ and perform the QR decomposition
    \begin{equation}
    \bar{X}_{t+1} = Q_{t+1}R_{t+1},
    \end{equation}
    where $Q_{t+1} \in \mathbb{R}^{d \times B}$ is an orthogonal matrix ($Q_{t+1}^{\top}Q_{t+1} = I_B$), and $R_{t+1} \in \mathbb{R}^{B \times B}$ is an upper triangular matrix.
    
    \item  Form the small matrix
    \begin{equation}
    T_{t+1} = I_B + R_{t+1}R_{t+1}^{\top} \in \mathbb{R}^{B \times B}.
    \end{equation}
    
    \item Compute the Cholesky decomposition of the small matrix
    \begin{equation}
    T_{t+1} = M_{t+1}M_{t+1}^{\top},
    \end{equation}
    where $M \in \mathbb{R}^{B \times B}$ is a lower triangular matrix.
    
    \item  Set
    \begin{equation} \label{eq: m}
    Y_{t+1} = M_{t+1} - I_B \in \mathbb{R}^{B \times B}.
    \end{equation}
\end{enumerate}

This result follows from Remark 3.4 in \cite{Ambikasaran}, see equations (3.5)--(3.6).
\end{theorem}

\subsection{Proof of \Cref{prop: batch update_rec}}
\label{sec:proof_batch update_rec}

\begin{proof}
Writing the Cholesky decomposition $A_t = L_t L_t^{\top}$ and factoring out $L_t$ gives
\begin{equation}
\begin{split}
A_{t+1}
&= A_t + X_{t+1}X_{t+1}^{\top} \\[4pt]
&= L_t L_t^{\top} + X_{t+1}X_{t+1}^{\top} \\[4pt]
&= L_t\bigl(I + L_t^{-1}X_{t+1}X_{t+1}^{\top} L_t^{-T}\bigr)L_t^{\top} \\[4pt]
&= L_t\bigl(I + \bar{X}_{t+1}\bar{X}_{t+1}^{\top}\bigr)L_t^{\top},
\end{split}
\end{equation}
where $\bar{X}_{t+1} = L_t^{-1}X_{t+1}$. Applying Theorem~\ref{theorem: sym_fact}, we obtain
\begin{equation}
I + \bar{X}_{t+1}\bar{X}_{t+1}^{\top} = \bigl(I + Q_{t+1}Y_{t+1}Q_{t+1}^{\top}\bigr)\bigl(I + Q_{t+1}Y_{t+1}Q_{t+1}^{\top}\bigr)^{\top},
\end{equation}
which yields
\begin{equation}
A_{t+1} = L_t\bigl(I + Q_{t+1}Y_{t+1}Q_{t+1}^{\top}\bigr)\bigl(I + Q_{t+1}Y_{t+1}Q_{t+1}^{\top}\bigr)^{\top} L_t^{\top} = L_{t+1}L_{t+1}^{\top},
\end{equation}
where
\begin{equation}
L_{t+1} = L_t\bigl(I + Q_{t+1}Y_{t+1}Q_{t+1}^{\top}\bigr).
\end{equation}

The inverse of $L_{t+1}$ is
\begin{equation}
L_{t+1}^{-1} = \bigl(I + Q_{t+1}Y_{t+1}Q_{t+1}^{\top}\bigr)^{-1}L_t^{-1}.
\end{equation}
Applying equation $\eqref{eq: m}$ and the Sherman-Morrison-Woodbury formula gives
\begin{align}
\bigl(I + Q_{t+1}Y_{t+1}Q_{t+1}^{\top}\bigr)^{-1} 
&= I - Q_{t+1}\bigl((M_{t+1} - I_B)^{-1} + I\bigr)^{-1} Q_{t+1}^{\top} \\
&= I - Q_{t+1}(M_{t+1} - I)\bigl(M_{t+1} - I + I\bigr)^{-1} Q_{t+1}^{\top} \\
&= I - Q_{t+1}(M_{t+1} - I)M_{t+1}^{-1} Q_{t+1}^{\top}
\end{align}

We proceed by induction to maintain a low-rank representation of $L_t^{-1}$. Assume that 
\begin{equation}
L_t^{-1} = (I - U_tV_t^{\top})L_0^{-1}.
\end{equation}
Then
\begin{align}
L_{t+1}^{-1} 
&= \bigl(I - Q_{t+1}(M_{t+1} - I)M_{t+1}^{-1} Q_{t+1}^{\top}\bigr)L_t^{-1} \\
&= \bigl(I - Q_{t+1}(M_{t+1} - I)M_{t+1}^{-1} Q_{t+1}^{\top}\bigr)(I - U_tV_t^{\top})L_0^{-1} \\
&= \bigl(I - U_tV_t^{\top} - Q_{t+1}(M_{t+1} - I)M_{t+1}^{-1}Q_{t+1}^{\top}(I - U_tV_t^{\top})\bigr)L_0^{-1} \\
&= (I - U_{t+1}V_{t+1}^{\top})L_0^{-1},
\end{align}
where
\begin{equation}
U_{t+1} = \begin{bmatrix} U_t & Q_{t+1}(M_{t+1} - I)M_{t+1}^{-1}\end{bmatrix} 
\end{equation}
\begin{equation}
V_{t+1} = \begin{bmatrix} V_t & (I - V_tU_t^{\top})Q_{t+1} \end{bmatrix} 
\end{equation}
This completes the inductive step.
\end{proof}

\end{appendix}

\end{document}